\documentclass{article} % For LaTeX2e
\usepackage[preprint]{colm2026_conference}
\usepackage[pdftex]{graphicx}
\usepackage{microtype}
\usepackage{hyperref}
\usepackage{url}
\usepackage{booktabs}
\usepackage{amsmath,amssymb,amsthm,mathtools}
\usepackage{array}
\usepackage{adjustbox}
\usepackage{tabularx}
\usepackage{graphicx}
\usepackage{rotating}
\usepackage{longtable}
\usepackage{multirow}
\usepackage{array}
\usepackage{float}
\usepackage{siunitx}

\newtheorem{theorem}{Theorem}
\newtheorem{proposition}{Proposition}
\newtheorem{corollary}{Corollary}
\newtheorem{assumption}{Assumption}

\newcommand{\R}{\mathbb{R}}

\newcommand{\Sset}{\mathcal{S}}

\newcommand{\norm}[1]{\left\lVert #1 \right\rVert}
\newcommand{\inner}[2]{\left\langle #1, #2 \right\rangle}

\newcolumntype{P}[1]{>{\centering\arraybackslash}p{#1}}
\newcolumntype{C}[1]{>{\centering\arraybackslash}p{#1}}
\usepackage{comment}

\usepackage{lineno}

\definecolor{darkblue}{rgb}{0, 0, 0.5}
\hypersetup{colorlinks=true, citecolor=darkblue, linkcolor=darkblue, urlcolor=darkblue}

\title{Improving Robustness of Tabular Retrieval via Representational Stability}
\author{Kushal Raj Bhandari *  \\
Department of Computer Science\\
Rensselaer Polytechnic Institute\\
Troy, NY, USA \\
\texttt{bhandk@rpi.edu} \\
\And
Adarsh Singh * \\
Department of Computer Science\\
Arizona State University \\
Tempe, Arizona, USA \\
\texttt{asing725@asu.edu} \\
\And
Jianxi Gao \\
Department of Computer Science\\
Rensselaer Polytechnic Institute\\
Troy, NY, USA \\
\And
Soham Dan \textdagger \\
ScaleAI \\
San Francisco, CA, USA \\
\And
Vivek Gupta \textdagger\\
Department of Computer Science\\
Arizona State University \\
Tempe, Arizona, USA \\
}

\begin{document}
    \begingroup\def\thefootnote{}\footnotetext{$\textbf{*}$These authors contributed equally to this work. \quad \quad \textdagger These authors jointly supervised this work.}\endgroup

\ifcolmsubmission
    \linenumbers
\fi

\maketitle

    \begin{abstract}
        Transformer-based table retrieval systems flatten structured tables into token sequences, making retrieval sensitive to the choice of serialization even when table semantics remain unchanged. 
        We show that semantically equivalent serializations, such as \texttt{csv}, \texttt{tsv}, $\texttt{html}$, \texttt{markdown}, and \texttt{ddl}, can produce substantially different embeddings and retrieval results across multiple benchmarks and retriever families. 
        To address this instability, we treat serialization embedding as noisy views of a shared semantic signal and use its centroid as a canonical target representation. 
        We show that centroid averaging suppresses format-specific variation and can recover the semantic content common to different serializations when format-induced shifts differ across tables.
        Empirically, centroid representations outrank individual formats in aggregate pairwise comparisons across \texttt{MPNet}, \texttt{BGE-M3}, \texttt{ReasonIR}, and \texttt{SPLADE}. We further introduce a lightweight residual bottleneck adapter on top of a frozen encoder that maps single-serialization embeddings towards centroid targets while preserving variance and enforcing covariance regularization. 
       The adapter improves robustness for several dense retrievers, though gains are model-dependent and weaker for sparse lexical retrieval. These results identify serialization sensitivity as a major source of retrieval variance and show the promise of post hoc geometric correction for serialization-invariant table retrieval.
       Our code, datasets, and models are available at 
       \href{https://github.com/KBhandari11/Centroid-Aligned-Table-Retrieval}{https://github.com/KBhandari11/Centroid-Aligned-Table-Retrieval}
       %\href{https://anonymous.4open.science/r/Centroid-Aligned-Table-Retrieval-8D75/readme.md}{https://anonymous.4open.science/r/Centroid-Aligned-Table-Retrieval-8D75}.

        %To address this instability, we treat the embeddings of multiple serializations as a semantic orbit and use their centroid as a canonical target representation. We then introduce a lightweight post-hoc residual bottleneck adapter on top of a frozen encoder that transports single-view embeddings toward centroid targets using a VICReg-inspired objective. Experiments on \textbf{WTQ}, \textbf{WikiSQL}, and \textbf{NQ-Tables} with \texttt{MPNet}, \texttt{BGE-M3}, \texttt{ReasonIR}, and \texttt{SPLADE} show that serialization sensitivity is a substantial source of retrieval variance and that centroid-based transport improves robustness for several dense retrievers, though gains are model-dependent and weaker in sparse lexical spaces. These results highlight both the importance of serialization-invariant representations for structured retrieval and the practical limits of post-hoc geometric correction, opening the door to further work on robust, serialization-invariant retrieval. Our code, datasets, and models are available at \href{https://anonymous.4open.science/r/Centroid-Aligned-Table-Retrieval-8D75/readme.md}{https://anonymous.4open.science/r/Centroid-Aligned-Table-Retrieval-8D75}.
    \end{abstract}
    
    \section{Introduction}

        Transformer retrievers consume one-dimensional token sequences. Tables are not one-dimensional. Flattening a table into a linear string, serializing its rows, columns, and cell values into something an encoder can process, is therefore an unavoidable design choice, and one that has been treated as a minor preprocessing detail rather than a variable that reshapes the retrieval landscape. Semantically equivalent re-expressions of the same table (\texttt{CSV}, \texttt{HTML}, \texttt{JSON}) produce different embeddings across retriever families even though the relational content remains identical, and the downstream ranking consequences of that drift are not small (Figure~\ref{fig:serialization_sensitivity_overview}). This neglect carries a cost that the field has not yet systematically measured.

        Progress on table understanding has unfolded in successive layers. Early benchmarks such as \textbf{WTQ} \citep{pasupatCompositionalSemanticParsing2015} and \textbf{WikiSQL} \citep{zhongSeq2SQLGeneratingStructured2017} established that models must reason over row-column structure rather than isolated spans. Open-domain extensions such as \textbf{NQ-Tables} \citep{herzigOpenDomainQuestion2021} then pushed the problem toward retrieval from large corpora.   
        
        Transformer-based table models have addressed the mismatch between sequential encoders and relational structure through architectural modifications, including refined attention in \textit{Structure-Aware Transformer} \citep{zhang-etal-2020-table} and \textit{TableFormer} \citep{yang-etal-2022-tableformer}, hierarchical encoding in \textit{TUTA} \citep{10.1145/3447548.3467434}, and through diverse serialization strategies, ranging from row-by-row flattening 
        \citep{herzigTaPasWeaklySupervised2020, eisenschlos-etal-2021-mate, 
        10.1145/3542700.3542709} to structural token marking \citep{liu2022tapex} and joint row-column encoding \citep{iida-etal-2021-tabbie}. Yet how these serialization choices affect retrieval performance has received little systematic attention.
        
        Despite these diverse advancements in data representation, the specific influence of these serialization methods on table retrieval performance remains a significant, under-researched gap in the literature.

        We re-frame the problem around a more fundamental question: \emph{what embedding should a retriever assign to a table, independent of its serialization format?} We treat multiple serializations of the same table as an orbit in representation space and target their centroid as a canonical, format-agnostic representative. We show empirically and theoretically that averaging embeddings across serialization views cancels format-induced perturbations, and that the resulting centroid reliably recovers the semantic content shared across formats. To avoid the inference cost of encoding every serialization variant, we introduce a lightweight post-hoc \emph{residual bottleneck adapter} trained to transport single-format embeddings toward centroid targets using a variance, and covariance regularized objective~\citep{bardes2022vicregl}, incurring no additional cost at inference and 
        requiring no re-indexing of the corpus.
        
        We make three contributions. \textit{First}, we systematically measure how retrieval performance varies when the same table is serialized in different formats, quantifying embedding instability across four retriever families spanning dense, sparse, and reasoning-oriented architectures. \textit{Second}, we prove that averaging embeddings across serialization views recovers a stable, format-independent representation, and establish the theoretical conditions under which this guarantee holds. \textit{Third}, we introduce the residual bottleneck adapter, a lightweight module that approximates this averaging behavior at single-format inference cost, and characterize when it succeeds or degrades depending on the retriever architecture and the degree of format divergence.

    \section{The Serialization Bottleneck in Structured Retrieval}
        \begin{figure*}[t]
            \centering
            \tiny
            \includegraphics[width=\linewidth]{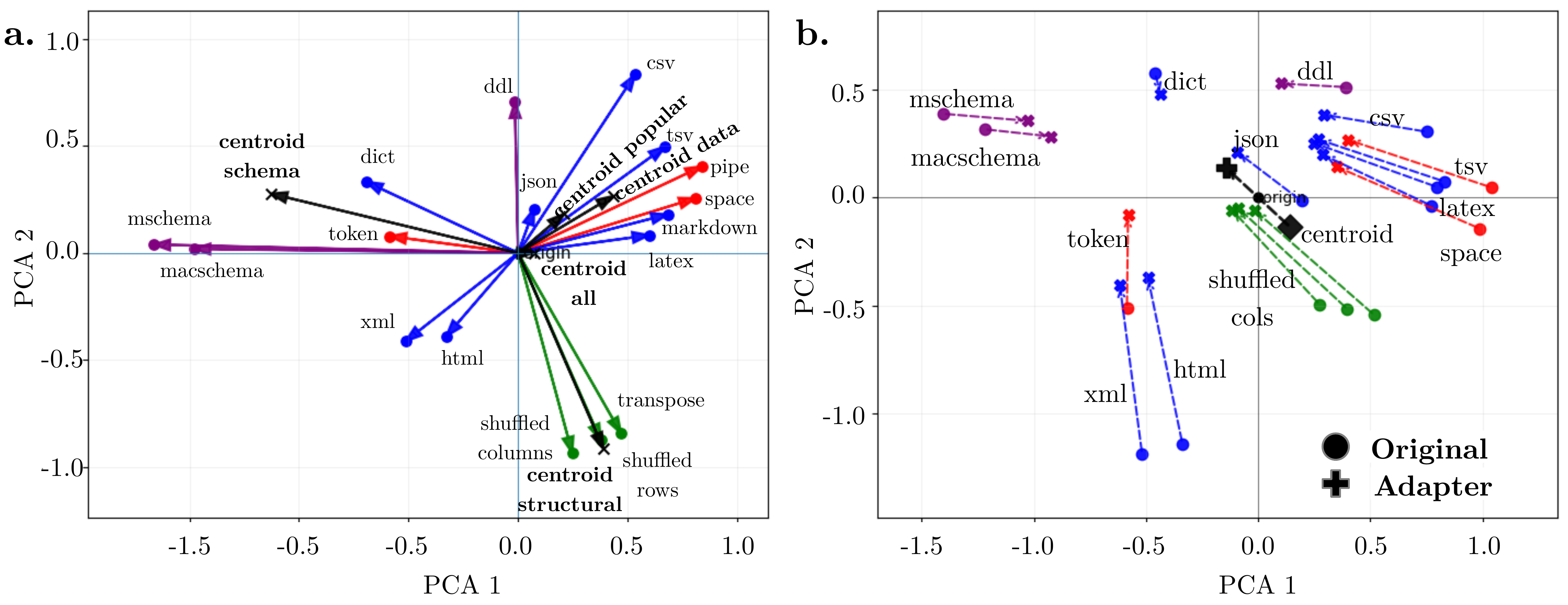}
            \caption{\small Serialization sensitivity in table retrieval. \textbf{(a)} Different serializations of the same \textbf{WTQ} table (\texttt{csv/200-csv/0}) map to distinct regions in the \texttt{ReasonIR} embedding space. \textbf{(b)} A single table shown under multiple serialization views with their adapter transport: circles denote original frozen embeddings, crosses denote adapted embeddings after transport, and diamonds show the centroid(\textbf{CENTROID ALL}) embedding. The adapter moves serialization-specific embeddings toward a shared region while transporting the centroid coherently within the same representation space. 
            %\textbf{(c)} The standard deviation and range of \textbf{Recall@1} across serializations reveal substantial variability across \textbf{WTQ}, \textbf{WikiSQL}, and \textbf{NQ-Tables}, and across both dense and sparse retrievers with their adapter implementations.
            }
            \label{fig:serialization_sensitivity_overview}
        \end{figure*}

        \begin{table}[h]
        \centering
        \resizebox{\linewidth}{!}{
        \begin{tabular}{l
            S[table-format=1.6] S[table-format=1.6] S[table-format=1.6] S[table-format=1.6]
            S[table-format=1.6] S[table-format=1.6] S[table-format=1.6] S[table-format=1.6]
            S[table-format=1.6] S[table-format=1.6] S[table-format=1.6] S[table-format=1.6]}

        \toprule
        & \multicolumn{4}{c}{\textbf{WTQ}} & \multicolumn{4}{c}{\textbf{WIKISQL}} & \multicolumn{4}{c}{\textbf{NQ-Tables}} \\
        \cmidrule(lr){2-5} \cmidrule(lr){6-9} \cmidrule(lr){10-13}
        {Model} & {Std} & {Min} & {Max} & {Range} & {Std} & {Min} & {Max} & {Range} & {Std} & {Min} & {Max} & {Range} \\
        \midrule
        \texttt{MPNet}                   &  0.052 & 0.087 & 0.247 & 0.160 & 0.038 & 0.096 &  0.215 & 0.119 & 0.087 & 0.013 & 0.275 & 0.262 \\
        \quad---~$\spadesuit$            & 0.030 &  0.166 &  0.249& 0.083 &0.021 &  0.141  & 0.212 &0.071 & 0.070 &  0.016  & 0.257 & 0.241\\
        \quad---~$\diamondsuit$     & 0.029  & 0.169  & 0.246& 0.078 &0.021 &  0.150  & 0.219 &0.069 & 0.078  & 0.024 &  0.265 & 0.241\\
        \midrule
        \texttt{ReasonIR}                & 0.040 &  0.224 &  0.370 & 0.146 & 0.041 &  0.215 &  0.355 & 0.139 & 0.077 &  0.083  & 0.342 & 0.259 \\
        \quad---~$\spadesuit$         & 0.032  & 0.261 &  0.364 & 0.103 & 0.029 &  0.238 &  0.342 & 0.105 & 0.057 &  0.109  & 0.333 & 0.225 \\
        \quad---~$\diamondsuit$  & 0.022 &  0.286 &  0.358 & 0.072 & 0.022  & 0.255  & 0.332 & 0.077 & 0.066 &  0.106 &  0.347 & 0.241 \\
        \midrule
        \texttt{BGE-M3}                     & 0.034 &  0.220 &  0.315 & 0.095 & 0.029 & 0.252 & 0.344 & 0.092 & 0.072  & 0.071 &  0.335 & 0.264\\
         \quad---~$\spadesuit$          & 0.029 &  0.220  & 0.314 & 0.094 & 0.027 &  0.254 &  0.346 & 0.092 &  0.057  & 0.085  & 0.316 & 0.231 \\
         \quad---~$\diamondsuit$    &  0.024  & 0.234  & 0.314 & 0.080 & 0.024 & 0.265 &  0.347 & 0.082 & 0.062 &  0.082 &  0.321& 0.239 \\
        \midrule
        \texttt{SPLADE}                  & 0.048 &  0.288 &  0.444 & 0.156 & 0.060  & 0.345  & 0.516 & 0.171  & 0.076  & 0.087 &  0.331 & 0.244 \\
        \quad---~$\spadesuit$            & 0.039  & 0.263 &  0.387 & 0.124 & 0.047  & 0.306 &  0.443 & 0.137  & 0.027  & 0.095 &  0.197 & 0.101\\
        \quad---~$\diamondsuit$    & 0.029  & 0.282 &  0.382 & 0.100 & 0.038  & 0.335 &  0.450 & 0.114  & 0.038  & 0.072  & 0.222 & 0.149\\
        \bottomrule 
        \end{tabular}
        }
        \caption{Retrieval score variation statistics (std, min, max, range) for each model across \textbf{WTQ}, \textbf{WIKISQL}, and \textbf{NQ-Tables}. $\spadesuit$ marks the joint adapter trained on all three datasets (\textbf{WTQ}, \textbf{WIKISQL}, and \textbf{NQ-Tables}), and $\diamondsuit$ marks the subset adapter trained on a subset of datasets (\textbf{WTQ} and \textbf{WIKISQL}) for each embedding model.}
        \label{tab:standard_deviation}
        \end{table}

        Transformer retrievers are predominantly pre-trained on sequential corpora and consume token sequences via positional encoding and self-attention. A table, by contrast, is defined by row-column interactions, header-cell alignment, and local relational structure. Flattening a grid into a sequence inevitably imposes an ordering and a syntactic scaffolding that are not intrinsic to the table itself.
            
        %Because these formats alter token order, token repetition, and delimiter distribution, they modify the signals available to the tokenizer and attention layers. The result is a representational instability that is not explained by changes in semantic content, since the table's content remains unchanged.
    
    \paragraph{Serialization Sensitivity as a First-Order Retrieval Variable.}
        Figure~\ref{fig:serialization_sensitivity_overview} demonstrates that serialization is a first-order factor in retrieval, rather than a minor implementation detail. Empirically, retrieval effectiveness varies substantially across serialization formats and datasets. In particular, Table~\ref{tab:standard_deviation} shows pronounced variation in Recall@1 across serialization families, summarized by both the standard deviation and the range, on three benchmarks with distinct structural characteristics: \textbf{WTQ}, \textbf{WikiSQL}, and \textbf{NQ-Tables} and four different embedding models: \texttt{BGE-M3}, \texttt{MPnet}, \texttt{ReasonIR} and \texttt{SPLADE}. 
        (dataset and model details in Appendices~\ref{section:datasets}--\ref{section:embedding_models}; serialization formats in Appendix~\ref{section:serialization_format})
        The effect peaks on NQ-Tables, where the lexical gap between natural-language queries and target tables is largest. In this regime the retriever receives weak direct lexical evidence and must rely more heavily on structural cues introduced by serialization, amplifying the performance cost of representation instability.
        As a result, retrieval becomes more sensitive to how tabular content is linearized, and performance degrades more sharply under changes in format. More details on the recall score are in Appendix~\ref{section:detailed_recall_score}.

        %Additional details on these datasets and models are provided in Appendix~ and Appendix~ respectively. Across all three benchmarks, Recall@1 changes materially as the same underlying table is expressed in formats such as \texttt{CSV}, \texttt{TSV}, \texttt{HTML}, \texttt{Markdown}, and \texttt{JSON}, as well as in schema-oriented views and structure-preserving transformations including row and column shuffles. Further details on the serialization procedures are given in Appendix~.

        %These results indicate that serialization should not be treated as a superficial formatting choice, but rather as a modeling decision that directly affects the retrievable signal. The effect is particularly pronounced on \textbf{NQ-Tables}, where the lexical gap between the natural-language query and the relevant table is typically larger. In this regime, the retriever receives weaker direct lexical evidence and must rely more heavily on structural and syntactic cues introduced by the serialization. 

%\subsection{Formalizing Serialization Sensitivity}
        \paragraph{Embedding Instability Under Syntactic Re-expression.}
            Let $T$ denote an underlying table and $\mathcal{S}(T)$ a finite set of semantics-preserving serializations of $T$ i.e. CSV, TSV, HTML, Markdown, JSON, and row or column permutations. Let $f\colon \text{Strings} \to \mathbb{R}^d$ be a frozen encoder. For each serialization $s \in \mathcal{S}(T)$ define 
            \begin{equation}
              \mathbf{z}_s(T) := f\!\left(\mathrm{ser}_s(T)\right) \in \mathbb{R}^d.
              % f(\mathrm{ser}_s(T))
            \end{equation}
            Serialization sensitivity is the degree to which $\{\mathbf{z}_s(T) : s \in \mathcal{S}(T)\}$ spreads in embedding space despite identical underlying semantics.
            Figure~\ref{fig:serialization_sensitivity_overview}(\textbf{a}) illustrates this for a single WTQ table; Figure~1(b) shows the pattern persists systematically across tables. Because ranking is a geometric operation, this spread changes nearest neighbors and reorders the top-$K$ retrieved tables, producing the Recall@1 variance visible in Table~\ref{tab:standard_deviation}.

    \iffalse
    \subsection{Architecture-Level Universality of the Effect}
    
        Figure \ref{fig:serialization_sensitivity_overview}(c) shows that serialization sensitivity is a cross-cutting issue rather than a weakness of any single retriever architecture. The standard deviation and range of Recall@1 remain substantial across WTQ, WikiSQL, and NQ Tables for generalized dense retrievers such as \texttt{BGE-M3}, dense encoders with strong sequence bias such as \texttt{MPNet}, reasoning-oriented dense retrievers such as \texttt{ReasonIR}, and learned sparse retrievers such as \texttt{SPLADE}. This consistency across model families indicates that the effect does not arise from a specific architecture or pretraining strategy. Instead, it reflects a broader limitation of sequence-based processing of structured inputs: changing the serialization alters token order, contextual structure, and lexical statistics, which in turn can shift dense representations or distort sparse expansion weights.
    \fi
    \section{Centroid-Based Representations as Stable Anchors}
    %\vspace{-0.5em}
            %==========================================
    
            %A natural implication of serialization sensitivity is that different serializations of the same table often lie in a localized region of embedding space, which makes centroid averaging a plausible way to obtain a more stable representation. This is not an ad hoc idea. Prior work on invariant representation learning motivates averaging across transformed views through orbit averaging \cite{chenGroupTheoreticFrameworkData2020}, orbit-based representations \cite{rajLocalGroupInvariant2017}, and formal treatments of invariance and selectivity \cite{anselmiInvarianceSelectivityRepresentation2016, anselmiUnsupervisedLearningInvariant2016}. Closely related work on representation averaging also shows that mean embeddings across augmentations can improve stability \cite{ashukhaMeanEmbeddingsTestTime2021} and that centroids of multiple views can serve as useful anchor targets \cite{moonEmbeddingDynamicApproachSelfSupervised2023}. For tables, this perspective is especially relevant because recent work shows that serialization format materially affects model behavior \cite{suiTableMeetsLLM2024}, making serialization a genuine modeling variable rather than a superficial formatting choice \cite{badaroTransformersTabularData2023a}.
            A natural implication of serialization sensitivity is that different serializations often lie in a localized region of embedding space, which makes centroid averaging a plausible way to obtain a more stable representation. Prior work on invariant representation learning motivates averaging across transformed views through orbit averaging \cite{chenGroupTheoreticFrameworkData2020}, orbit-based representations \cite{rajLocalGroupInvariant2017}, and formal treatments of invariance and selectivity \cite{anselmiInvarianceSelectivityRepresentation2016, anselmiUnsupervisedLearningInvariant2016}. Closely related work on representation averaging also shows that mean embeddings across augmentations can improve stability \cite{ashukhaMeanEmbeddingsTestTime2021} and that centroids of multiple views can serve as useful anchor targets \cite{moonEmbeddingDynamicApproachSelfSupervised2023}. For tables, this perspective is especially relevant because recent work shows that serialization format materially affects model behavior \cite{suiTableMeetsLLM2024}, making serialization a genuine modeling variable rather than a superficial formatting choice \cite{badaroTransformersTabularData2023a}.
            
            For a table $T$ with serialization family $\Sset(T)$, we define the centroid
            \[
            c(T) := \frac{1}{|\Sset(T)|} \sum_{s \in \Sset(T)} z_s(T),
            \qquad
            z_s(T) := f(\mathrm{ser}_s(T)) \in \R^d.
            \]

        \paragraph{Modeling Serialization Effects as Centered Perturbations.}\label{section:mathematical_definition}
        
           To interpret the centroid, we model each serialization embedding as a combination of a stable semantic signal ($\mu$) shared across all formats and a format-specific shift ($\delta$). Averaging across serializations cancels these shifts, a strategy supported by prior work on learning representations that stay consistent under different views of the same data \cite{rajLocalGroupInvariant2017, anselmiInvarianceSelectivityRepresentation2016, chenGroupTheoreticFrameworkData2020, ashukhaMeanEmbeddingsTestTime2021}.

            \begin{assumption}[Approximate Perturbation Centering Across Serializations]\label{Section:assumption1}
                For a table $T$ and serialization family $\Sset(T)$, we model the embedding of each serialization as
                \[
                z_s(T)=\mu(T)+\delta_s(T), \qquad s\in\Sset(T),
                \]
                where $\mu(T)\in\R^d$ is the stable semantic signal shared across all formats, and $\delta_s(T)\in\R^d$ is the format-specific shift introduced by a particular serialization. We assume these shifts roughly cancel when averaged across the serialization family:
                \[
                \frac{1}{|\Sset(T)|}\sum_{s\in\Sset(T)}\delta_s(T)\approx 0.
                \]
            \end{assumption}
                Under this assumption, the centroid $c(T)$ approximates the stable semantic signal $\mu(T)$. This interpretation is motivated by prior work on invariant representations and orbit-based views of transformed data \cite{rajLocalGroupInvariant2017,anselmiInvarianceSelectivityRepresentation2016}, by group-theoretic analyses of averaging over transformations to reduce variation \cite{chenGroupTheoreticFrameworkData2020}, and by empirical work on mean embeddings and centroid targets across augmented views \cite{ashukhaMeanEmbeddingsTestTime2021,moonEmbeddingDynamicApproachSelfSupervised2023}. 
                
                We adopt this as a working assumption, not a property the encoder enforces. The centering condition holds when the format-specific shifts vary from table to table, so that averaging across serializations suppresses them. It can degrade, however, when the serialization family includes formats whose shifts remain largely the same regardless of table content. Schema-oriented and markup-heavy serializations, for example, introduce syntactic tokens and structural vocabulary that displace every table's embedding in a similar way, producing a table-independent component in the format-specific shift that centroid averaging cannot cancel. When such formats enter the serialization family, the centroid absorbs their shared displacement rather than recovering the stable semantic signal $\mu(T)$. We decompose the format-specific shift into its table-independent and table-dependent components and characterize the conditions under which the centering condition holds or breaks down in Appendix~\ref{appendix:residual_analysis}.
                
            %This decomposition formalizes the idea that different serializations of the same table may induce view-specific distortions arising from markup artifacts, token repetition, ordering effects, or contextual formatting, while still sharing an underlying semantic signal. Similar modeling intuitions appear in prior work that treats transformed views as perturbations around a shared invariant representation and uses averaging across views to reduce nuisance variation \cite{ashukhaMeanEmbeddingsTestTime2021,moonEmbeddingDynamicApproachSelfSupervised2023,chenGroupTheoreticFrameworkData2020}.
        
            \begin{theorem}[Euclidean optimality of the centroid]\label{thm:centroid_euclidean}
            For any table $T$ with serialization family $\Sset(T)$, the centroid
                \[
                    c(T) = \frac{1}{|\Sset(T)|} \sum_{s \in \Sset(T)} z_s(T)
                \]
                is the unique point in $\R^d$ closest to all serialization embeddings simultaneously, in the sense that it minimizes
                \[
                    \sum_{s \in \Sset(T)} \norm{u - z_s(T)}_2^2
                \]
                over all $u \in \R^d$. Equivalently, for every $u \in \R^d$,
                \[
                    \sum_{s \in \Sset(T)} \norm{c(T) - z_s(T)}_2^2
                    \le
                    \sum_{s \in \Sset(T)} \norm{u - z_s(T)}_2^2.
                \]
                \end{theorem}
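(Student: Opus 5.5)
The plan is to prove the statement via the bias--variance (parallel-axis) decomposition of the sum of squared distances. Write $n := |\Sset(T)| \ge 1$ and abbreviate $z_s := z_s(T)$ and $c := c(T)$. For an arbitrary $u \in \R^d$, insert $c$ into each term, writing $u - z_s = (u - c) + (c - z_s)$, and expand the squared norm:
\[
\sum_{s \in \Sset(T)} \norm{u - z_s}_2^2
= n\norm{u - c}_2^2 + 2\inner{u - c}{\sum_{s \in \Sset(T)} (c - z_s)} + \sum_{s \in \Sset(T)} \norm{c - z_s}_2^2 .
\]

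The key step is to check that the cross term vanishes. By the definition of the centroid,
\[
\sum_{s} (c - z_s) = n c - \sum_{s} z_s = 0,
\]
so the middle inner product is identically zero for every $u$. This yields the exact identity
\[
\sum_{s} \norm{u - z_s}_2^2 = \sum_{s} \norm{c - z_s}_2^2 + n\norm{u - c}_2^2 .
\]

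Both claims then follow from this identity. Since $n\norm{u-c}_2^2 \ge 0$, we get the displayed inequality for every $u$, so $c$ is a minimizer. Moreover, equality holds iff $\norm{u-c}_2 = 0$, that is, iff $u = c$, which gives uniqueness. As an alternative confirmation, the objective is strictly convex with Hessian $2nI$, and its gradient $2\sum_s (u - z_s)$ vanishes exactly at $u = c$.

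There is no real obstacle here. The only point needing care is the implicit requirement that $\Sset(T)$ be nonempty, so that $n \ge 1$, $c$ is well defined, and $n\norm{u-c}_2^2 > 0$ whenever $u \ne c$; this is what makes the uniqueness strict. The theorem uses none of Assumption~\ref{Section:assumption1}; it is a purely geometric fact about the Euclidean mean.
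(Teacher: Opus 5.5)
Your proof is correct, and it reaches the result by a different route than the paper. The paper expands $J(u)=\sum_{s}\norm{u-z_s(T)}_2^2$ and computes $\nabla_u J(u)=2|\Sset(T)|\,u-2\sum_{s}z_s(T)$. It sets this gradient to zero to get $u^\star=c(T)$, then appeals to strict convexity of $J$ to conclude that this stationary point is the unique global minimizer. Your closing ``alternative confirmation'' is essentially that argument.

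Your main argument instead completes the square through the parallel-axis identity $\sum_{s}\norm{u-z_s}_2^2=\sum_{s}\norm{c-z_s}_2^2+n\norm{u-c}_2^2$. This route is purely algebraic. It needs no calculus and no appeal to the fact that a stationary point of a strictly convex differentiable function is its unique global minimizer. Uniqueness falls out of the equality case. It also yields more than the theorem states: the excess cost of any representative $u$ is exactly $n\norm{u-c}_2^2$, so the penalty for departing from the centroid grows quadratically with distance.

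The paper's gradient route is shorter and carries over to general smooth convex objectives, where no exact decomposition is available. Your observation that $\Sset(T)$ must be nonempty is a condition the paper leaves implicit.
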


            Refer to the proof in Appendix~\ref{thm:centroid_euclidean_proof}. 
                
            \begin{proposition}[Approximate recovery of the shared component]\label{prop:semantic_recovery}
                Now, we define the average format-specific shift
                \[
                    \bar{\delta}(T) := \frac{1}{|\Sset(T)|}\sum_{s \in \Sset(T)} \delta_s(T), 
                \]
                then,
                \[
                    c(T)=\mu(T)+\bar{\delta}(T).
                \]
                In particular, when the format-specific shifts roughly cancel across $\Sset(T)$, then \[c(T)\approx \mu(T).\]
                \end{proposition}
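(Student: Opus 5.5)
The identity follows by direct substitution of the decomposition in Assumption~\ref{Section:assumption1} into the definition of the centroid. Write $n := |\Sset(T)|$, which is finite and nonzero. Since $z_s(T) = \mu(T) + \delta_s(T)$ for every $s \in \Sset(T)$, linearity of the finite sum gives
\[
c(T) = \frac{1}{n}\sum_{s\in\Sset(T)} \bigl(\mu(T) + \delta_s(T)\bigr)
= \frac{1}{n}\cdot n\,\mu(T) + \frac{1}{n}\sum_{s\in\Sset(T)} \delta_s(T)
= \mu(T) + \bar{\delta}(T).
\]
The only point to note is that $\mu(T)$ does not depend on $s$, so summing it $n$ times and dividing by $n$ returns $\mu(T)$ exactly.

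For the approximate statement, I will make ``roughly cancel'' quantitative rather than leave it as a heuristic. The exact identity gives
\[
\norm{c(T) - \mu(T)}_2 = \norm{\bar{\delta}(T)}_2 .
\]
So if the centering condition holds in the form $\norm{\bar{\delta}(T)}_2 \le \varepsilon$ for some tolerance $\varepsilon \ge 0$, then $\norm{c(T) - \mu(T)}_2 \le \varepsilon$. This is precisely the sense of $c(T) \approx \mu(T)$.

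If a relative notion is preferred, I would add the following. The triangle inequality gives $\norm{\bar{\delta}(T)}_2 \le \frac{1}{n}\sum_{s}\norm{\delta_s(T)}_2$. Hence the approximation error is never worse than the average individual shift, and it is strictly smaller whenever the shifts point in different directions.

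This ties in with Theorem~\ref{thm:centroid_euclidean}: the same centroid is also the least-squares consensus of the views. I would remark that the quality of recovery is entirely governed by $\bar{\delta}(T)$. This is the quantity that the later residual analysis decomposes into table-independent and table-dependent parts.

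There is no real obstacle here; the argument is a one-line linearity computation. The only step needing care is turning the informal ``$\approx$'' into the explicit norm bound above, so that the conclusion is a precise statement rather than a restatement of the assumption.
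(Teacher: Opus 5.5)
Your proof is correct and takes the same route as the paper: substitute $z_s(T)=\mu(T)+\delta_s(T)$ into the centroid, use that $\mu(T)$ does not depend on $s$, and read off $c(T)=\mu(T)+\bar{\delta}(T)$. Your bound $\norm{c(T)-\mu(T)}_2=\norm{\bar{\delta}(T)}_2\le\varepsilon$ is a quantitative version of the paper's closing remark that a small average perturbation keeps the centroid correspondingly close to $\mu(T)$.
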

                
                Refer to the proof in Appendix~\ref{prop:semantic_recovery_proof}. 
                
                Theorem~\ref{thm:centroid_euclidean} and Proposition~\ref{prop:semantic_recovery} read the centroid from two angles. Geometrically, it sits at the least-squares center of all serialization embeddings. Under the assumption above, it also approximates the stable semantic signal $\mu(T)$, with the gap controlled by how well the format-specific shifts cancel on average. Centroid averaging, therefore, pushes format-specific shifts toward zero while holding onto the signal that persists across all formats of the same table.  
                %This interpretation is aligned with prior work that uses averaging over transformed views to obtain more stable representations \cite{ashukhaMeanEmbeddingsTestTime2021}, centroid-like targets across multiple views \cite{moonEmbeddingDynamicApproachSelfSupervised2023}, and orbit averaging to construct more invariant surrogate representations \cite{chenGroupTheoreticFrameworkData2020}.

                \begin{proposition}[Centroid as an average over a transformation family]
                Let $\mathcal{G}_T$ denote a finite family of meaning-preserving transformations applied to $T$, where each serialization in $\Sset(T)$ corresponds to one transformed format $g \cdot T$ for some $g \in \mathcal{G}_T$. Then
                \[
                    c(T)=\frac{1}{|\mathcal{G}_T|}\sum_{g\in\mathcal{G}_T} f(g\cdot T).
                \]
                The centroid is therefore the average encoder output across all meaning-preserving transformations of the same table.
                \end{proposition}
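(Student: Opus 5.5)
The plan is to reduce the statement to a reindexing of the finite sum that defines $c(T)$. The only substantive point is that the correspondence between $\mathcal{G}_T$ and $\Sset(T)$ must be a bijection. I will read the hypothesis as saying this: there is a map $\phi\colon \mathcal{G}_T \to \Sset(T)$ with $g\cdot T := \mathrm{ser}_{\phi(g)}(T)$, every $s$ is hit, and distinct $g$ give distinct formats.

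The first step is to make this bijection explicit, since it is the main obstacle. The statement literally guarantees only surjectivity, because each serialization comes from \emph{some} $g$. If two transformations produced the same serialization string, the right-hand side would double-weight that format and the identity could fail. I will therefore take $\mathcal{G}_T$ to be indexed by the serialization family, one transformation per format. This is the intended reading of ``each serialization corresponds to one transformed format.'' I will also note the alternative: if $\phi$ is a surjection whose fibers all have the same size $m$, the identity still holds, since both the numerator and $|\mathcal{G}_T|$ scale by $m$.

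The second step is a direct computation using the definition $z_s(T) = f(\mathrm{ser}_s(T))$:
\[
\frac{1}{|\mathcal{G}_T|}\sum_{g\in\mathcal{G}_T} f(g\cdot T)
= \frac{1}{|\Sset(T)|}\sum_{g\in\mathcal{G}_T} f\bigl(\mathrm{ser}_{\phi(g)}(T)\bigr)
= \frac{1}{|\Sset(T)|}\sum_{s\in\Sset(T)} z_s(T) = c(T).
\]
The first equality uses $|\mathcal{G}_T| = |\Sset(T)|$, and the second reindexes the sum by $s = \phi(g)$, which is valid because $\phi$ is a bijection.

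The final step is the interpretive sentence. Since $c(T)$ equals the uniform average of $f$ over the orbit $\{g\cdot T : g\in\mathcal{G}_T\}$, it is the orbit average of the encoder output. No structure on $\mathcal{G}_T$ beyond finiteness is needed; group closure would matter only for an invariance claim such as $c(g\cdot T) = c(T)$, which the statement does not make.
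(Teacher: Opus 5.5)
Your proof is correct. The paper gives no written proof of this proposition: it states the identity as an immediate relabeling of the definition of $c(T)$ and follows it only with interpretive remarks. Your reindexing through a bijection $\phi\colon\mathcal{G}_T\to\Sset(T)$ is exactly the argument the paper leaves implicit. You are also right that the literal hypothesis (``for some $g$'') gives only surjectivity. The identity therefore needs two extra assumptions, which the paper leaves unstated: injectivity (or equal-size fibers), and that every $g\cdot T$ is one of the $\mathrm{ser}_s(T)$.
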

                This connects centroid averaging to data augmentation, where averaging across transformed formats retains what stays stable while washing out what changes. The difference here is that the transformations are meaning-preserving table serializations rather than image augmentations \cite{chenGroupTheoreticFrameworkData2020}.
                
            \begin{corollary}[Reduction of serialization-induced score variance]
                Let $q \in \R^d$ be a query embedding, where relevance depends on the stable semantic signal $\mu(T)$. Then for any serialization $s$,
                \[
                    \inner{q}{z_s(T)} = \inner{q}{\mu(T)} + \inner{q}{\delta_s(T)}.
                \]
                Using the centroid instead yields,
                \[
                    \inner{q}{c(T)} \approx \inner{q}{\mu(T)}.
                \]
                Centroid targeting, therefore, approximates the removal of the average format-specific shift from the retrieval score.
            \end{corollary}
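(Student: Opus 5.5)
The plan is to substitute the decomposition of Assumption~\ref{Section:assumption1} into the inner product and then use linearity; no optimization or geometry is needed. First, for a fixed serialization $s\in\Sset(T)$, I write $z_s(T)=\mu(T)+\delta_s(T)$. Bilinearity of $\inner{\cdot}{\cdot}$ in the second argument then gives the exact identity $\inner{q}{z_s(T)}=\inner{q}{\mu(T)}+\inner{q}{\delta_s(T)}$ with no error term. I would note that the per-format score therefore differs from the semantic score by the format-dependent term $\inner{q}{\delta_s(T)}$, and this term is the source of cross-format score variance.

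Second, for the centroid I invoke Proposition~\ref{prop:semantic_recovery}, which gives $c(T)=\mu(T)+\bar{\delta}(T)$ exactly. Linearity then yields
\[
\inner{q}{c(T)}=\inner{q}{\mu(T)}+\inner{q}{\bar{\delta}(T)}=\frac{1}{|\Sset(T)|}\sum_{s\in\Sset(T)}\inner{q}{z_s(T)}.
\]
So the centroid score is the average of the per-format scores, and the only residual is the averaged shift term. To make ``$\approx$'' precise, I would apply Cauchy--Schwarz:
\[
\bigl|\inner{q}{c(T)}-\inner{q}{\mu(T)}\bigr|\le \norm{q}_2\,\norm{\bar{\delta}(T)}_2 .
\]
Under the approximate centering in Assumption~\ref{Section:assumption1}, $\norm{\bar{\delta}(T)}_2$ is small, and for bounded (for example normalized) queries the right-hand side is small as well. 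This establishes $\inner{q}{c(T)}\approx\inner{q}{\mu(T)}$ uniformly over such queries.

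Third, to justify the phrase ``variance reduction,'' I would compare the residual of a single format, $\inner{q}{\delta_s(T)}$, with the centroid's residual, $\inner{q}{\bar{\delta}(T)}$. By Jensen's inequality (convexity of $x\mapsto x^2$),
\[
\inner{q}{\bar{\delta}(T)}^2\le \frac{1}{|\Sset(T)|}\sum_{s}\inner{q}{\delta_s(T)}^2 .
\]
Hence the centroid's squared score error is never larger than the average squared error of a randomly chosen single format. The main obstacle is interpretive rather than technical. The corollary's ``$\approx$'' is only as strong as the centering assumption, so I would state the result as the quantitative bound above. I would also remark that a table-independent component of $\delta_s$, as discussed in Appendix~\ref{appendix:residual_analysis}, survives in $\bar{\delta}(T)$ and is not removed by averaging.
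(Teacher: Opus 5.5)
Your proposal is correct and follows the paper's reasoning: the paper gives no separate proof of this corollary, treating it as an immediate consequence of the decomposition in Assumption~\ref{Section:assumption1}, the identity $c(T)=\mu(T)+\bar{\delta}(T)$ from Proposition~\ref{prop:semantic_recovery}, and linearity of $\inner{q}{\cdot}$. Your Cauchy--Schwarz bound $\norm{q}_2\norm{\bar{\delta}(T)}_2$ and your Jensen comparison of squared score errors are sound additions that go beyond the paper, making its informal ``$\approx$'' and ``variance reduction'' language quantitative.
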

                
                Under Assumption~\ref{Section:assumption1}, the corollary shows concretely why centroid-based targets produce more stable retrieval scores. Format-specific shifts that would otherwise push scores around cancel each other out, leaving a score driven solely by semantic content.
                
        \begin{figure*}[t]
                \centering
                \tiny
                \includegraphics[width=\linewidth]{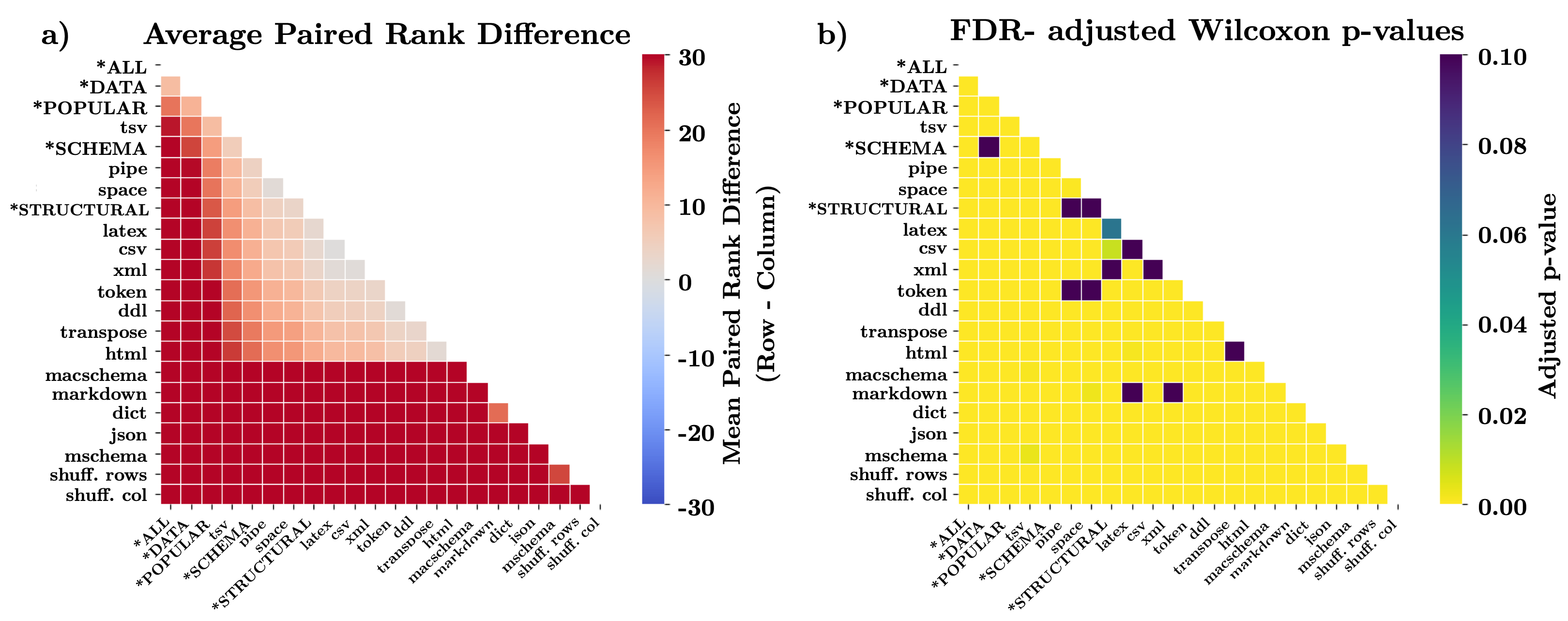}
                \caption{\small Pairwise comparison of table formats aggregated over matched evaluation instances across all models and datasets. Panel~\textbf{(a)}: Average paired rank difference per cell, where positive values indicate the column format outperforms the row. Formats are ordered by mean rank, strongest toward the upper-left. Panel~\textbf{(b)}: Benjamini-Hochberg FDR-adjusted $p$-values from pairwise Wilcoxon signed-rank tests on the same instances. (\textbf{*}) denotes \texttt{centroid} embedding (Appendix~\ref{section:serialization_format}); evaluation metric details appear in Appendix~\ref{sec:evaluation_metric}.
                Centroid-based representations occupy the top of the ordering, with \textbf{*ALL} performing best overall and showing broadly reliable advantages over many individual serializations.}
            \label{fig:centroid_pairwise}
            \end{figure*}
            
        \paragraph{Empirical Evidence for Stable Anchoring.}
            Multiple serializations of the same table largely carry the same semantic content, differing mainly in surface form. Averaging across these serializations pulls the stable semantic signal forward while format-specific shifts cancel out, producing a representation that holds steadier across different ways of writing the same table. Figure~\ref{fig:centroid_pairwise} provides direct empirical evidence for this claim in our table-retrieval setting.
            
            For each matched evaluation instance, we record the retrieval rank of each table format and compare formats pairwise via the average paired rank difference
            \[
            \Delta_{i,j} = \mathbb{E}[\mathrm{rank}(i) - \mathrm{rank}(j)],
            \]
            where a positive cell $(i,j)$ means format $j$ outranks format $i$ on average. The matrix is ordered by mean rank, placing stronger formats toward the upper-left.
            
            We find that \texttt{centroid-}based representations occupy the strongest region. \texttt{centroid\_all} performs best overall, followed by other centroid variants built from different serialization groupings (Appendix~\ref{section:serialization_format}). These variants outperform individual serializations by substantial rank margins, with the largest gaps against markup-heavy or order-perturbed formats such as \texttt{HTML} and \texttt{JSON}, as well as shuffled variants. The strongest non-centroid baseline, \texttt{TSV}, remains competitive but trails the best centroid constructions in aggregate.
            Benjamini-Hochberg-corrected Wilcoxon signed-rank tests confirm that centroid advantages over weaker single-format alternatives are statistically significant. Differences among centroid variants themselves are smaller, pointing to the consistent superiority of centroid-based targets as a family rather than to the dominance of any single construction.
            This pattern directly fits the working assumption. Averaging across serializations cancels format-specific shifts while retaining the stable semantic signal, and the rank comparisons bear this out across models and datasets.
             \vspace{-0.5em}
    \section{Post-Hoc Centroid Transport with a Residual Bottleneck Adapter}
        \vspace{-0.5em}
            Although centroid-based representations are theoretically attractive, they are costly to use at a production scale.Each table requires multiple serialized formats, multiple encoder passes, and either aggregated or per-format storage, with overhead scaling linearly across tokenization, encoding, and indexing. A practical system, therefore, needs to approximate centroid-level robustness while using only a single format at inference and indexing time.
    
        \paragraph{Design Principle.}
            Motivated by Assumption~\ref{Section:assumption1} in Section~\ref{section:mathematical_definition}, we do not retrain the base retriever $f$, but instead keep it frozen and learn a lightweight adapter that operates directly on the dense embedding $z_s(T)$. 
            The adapter learns to correct the format-specific shift $\delta_s(T)$ so that each serialization embedding moves closer to the shared semantic signal $\mu(T)$ without altering the semantics already captured by $f$.
            %The adapter targets format-specific shifts in representation space without touching the frozen retriever's semantic capacity. This keeps the approach parameter-efficient and guards against catastrophic forgetting, since the encoder weights stay fixed throughout.
            Concretely, the adapter takes a serialization-specific embedding $z_s(T) \in \mathbb{R}^d$, normalizes it, projects it into a lower-dimensional bottleneck, applies a GELU nonlinearity and dropout, then projects back to the original dimensionality. A scaled residual connection adds this correction to the input embedding, yielding
            \[
            \tilde z_s(T) = z_s(T) + \alpha \,\mathrm{Up}\bigl(\mathrm{DropOut}(\mathrm{GELU}(\mathrm{Down}(\mathrm{LN}(z_s(T)))))\bigr).
            \]
            The bottleneck keeps the correction low-capacity, while the residual path holds the original representation as the dominant signal. Implementation and training details appear in Appendix~\ref{section:implementation_detail} and Appendix~\ref{section:training_regime}, respectively.
            
        \paragraph{VICReg-Inspired Objective for Stable Transport.}
        
            \iffalse
            \begin{figure*}[t]
                \centering
                \includegraphics[width=\linewidth]{Figure/adapter/embedding_visualization.pdf}
                \caption{Visualization of post-hoc embedding transport with the residual adapter for \texttt{ReasonIR}. \textbf{a.} A single table shown under multiple serialization formats. Circles denote the original frozen embeddings and crosses denote the adapted embeddings after transport. Diamonds show the centroid embedding before and after adaptation. The adapter moves serialization-specific embeddings toward a shared region and also transports the centroid coherently in the same representation space. \textbf{b.} Ten example tables with \texttt{ReasonIR} to show that transport is not collapsing.}
                \label{fig:embedding_transport_adapter}
            \end{figure*}
            \fi

            A naive regression objective that simply minimizes the squared distance between the adapted embedding and the centroid would be unstable. In particular, the network could drift away from the original query-compatible space or collapse multiple inputs into an undifferentiated cluster. To avoid this, we optimize a composite objective inspired by VICReg~\citep{bardes2022vicregl}.
            
            Let $z_i$ denote adapted embeddings in a batch, $e_i$ their original frozen embeddings, and let $c_t$ denote the centroid target for table $t$. The overall objective is
             \vspace{-0.1em}
            \[
                \mathcal{L} = \omega_{\mathrm{inv}}\mathcal{L}_{\mathrm{inv}} + \omega_{\mathrm{var}}\mathcal{L}_{\mathrm{var}} + \omega_{\mathrm{cov}}\mathcal{L}_{\mathrm{cov}} + \omega_{\mathrm{id}}\mathcal{L}_{\mathrm{id}}.
            \]
             \vspace{-0.1em}
            Each term serves a distinct role. The invariance term pulls different serializations of the same table toward their shared centroid. The identity term keeps adapted embeddings anchored to the frozen query-compatible space. The variance term prevents collapse by preserving spread across the embedding space. The covariance term reduces redundancy across dimensions. Full loss definitions are explained in Appendix~\ref{section:loss_function}.

            Figure~\ref{fig:serialization_sensitivity_overview}\textbf{b} shows that embeddings from different serializations of the same table, initially dispersed, cluster more tightly after adaptation while the centroid moves coherently, suggesting the adapter learns a structured correction rather than an arbitrary displacement. Figure~\ref{fig:ten_embedding} confirms this across ten tables where adapted representations tighten around their respective centroids while inter-table separation holds. The variance and covariance terms keep the geometry expressive enough for retrieval throughout. Detailed results appear in Appendix~\ref{section:detailed_recall_score}.
         \vspace{-0.5em}
    \section{Serialization-Invariant Adapter}
        \vspace{-0.5em}
        \begin{figure*}[t]
                \centering
                \tiny
                \includegraphics[width=\linewidth]{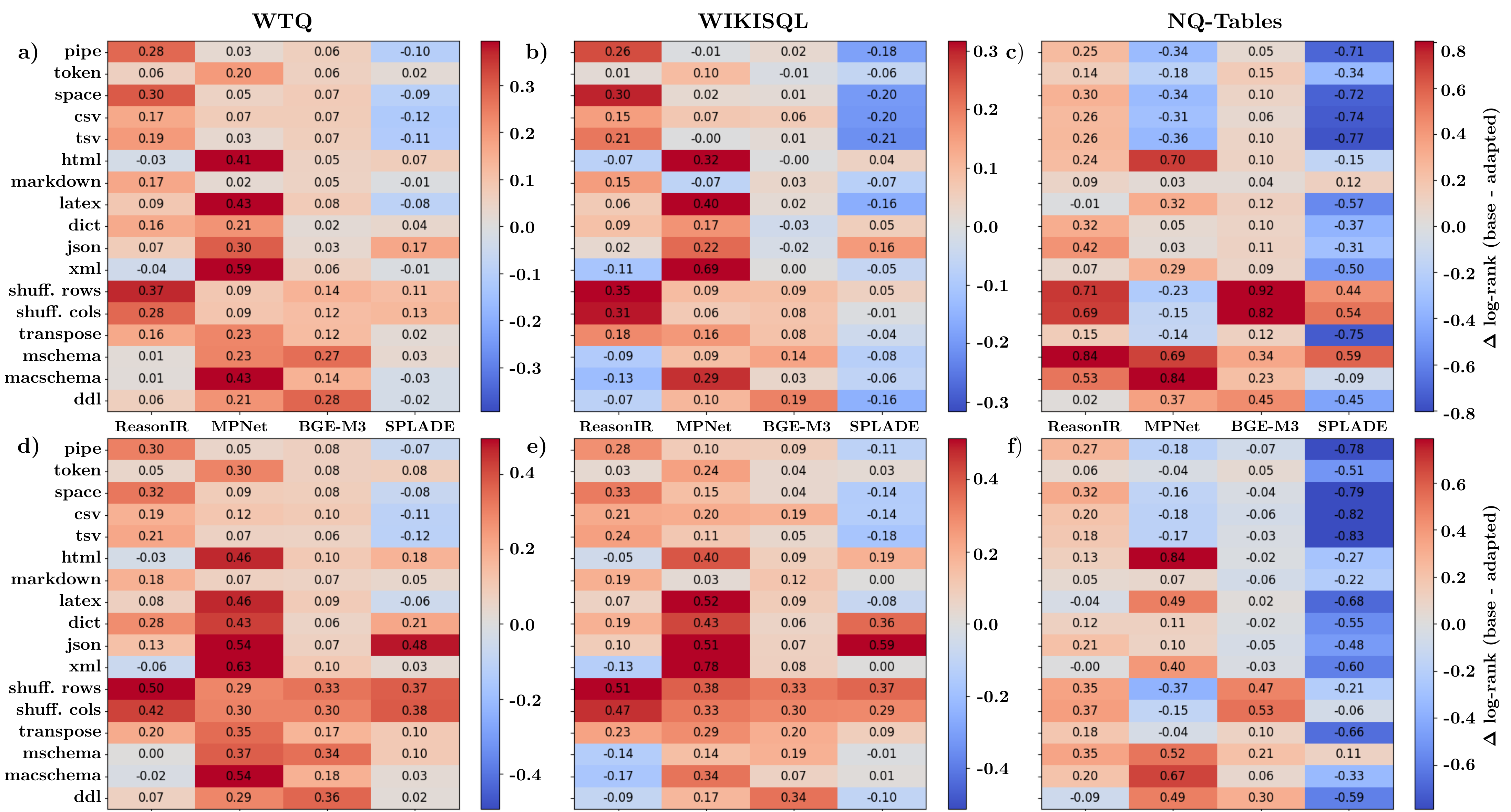}
                \caption{\small Heatmaps comparing adapter-based rank changes relative to the base model across three datasets, \textbf{WTQ}, \textbf{WikiSQL}, and \textbf{NQ-Tables}. Panels \textbf{a-c} show Adapter vs Base, and panels \textbf{d-f} show Adapter Subset vs Base. Rows correspond to input formats and columns to retriever backbones. Each cell reports the mean log-rank improvement, computed as \texttt{$\Delta$ log-rank(base - adapter)} as detailed in Appendix~\ref{sec:evaluation_metric}. \textbf{Red} indicates improvement (adapter better), while \textbf{blue} indicates degradation (adapter worse).}
            \label{fig:all_rank_adapter}
            \end{figure*}

        \begin{table}[ht]
    \centering
    \scriptsize%\tiny 
    \renewcommand{\arraystretch}{0.7}
    \begin{tabularx}{\linewidth}{
        >{\raggedright\arraybackslash}X
        >{\raggedright\arraybackslash}X
        >{\raggedright\arraybackslash}X
        *{6}{c}
    }
    \toprule
     & &  & \multicolumn{2}{c}{Base} & \multicolumn{2}{c}{Joint Adapter} & \multicolumn{2}{c}{Subset Adapter} \\
    \cmidrule(lr){4-5}\cmidrule(lr){6-7}\cmidrule(lr){8-9}
    Dataset & Model & Format & R@1 & Centroid & R@1 & $\Delta$ & R@1 & $\Delta$ \\
    \midrule
    \multirow{8}{*}{\textbf{WTQ}}
      & \multirow{2}{*}{\texttt{MPNet}}    & \texttt{html}\scalebox{0.8}{$\downarrow$} & 0.09 & \multirow{2}{*}{0.25} & 0.18 & \textbf{+0.09}  & 0.17 & \textbf{+0.08} \\
      &     & \texttt{tsv}\scalebox{0.8}{$\uparrow$} & 0.25 & & 0.25 & +0.00 & 0.25 & +0.00 \\
       \cmidrule(lr){2-3}\cmidrule(lr){4-5}\cmidrule(lr){6-7}\cmidrule(lr){8-9}
      & \multirow{2}{*}{\texttt{ReasonIR}} & \texttt{shuffled row}\scalebox{0.8}{$\downarrow$} & 0.22 & \multirow{2}{*}{0.36} & 0.26 & \textbf{+0.04}  & 0.29 & \textbf{+0.07} \\
      &  & \texttt{xml}\scalebox{0.8}{$\uparrow$}          & 0.37 & & 0.36 & \textit{$-$0.01} &  0.36 & \textit{$-$0.01} \\
       \cmidrule(lr){2-3}\cmidrule(lr){4-5}\cmidrule(lr){6-7}\cmidrule(lr){8-9}
      &  \multirow{2}{*}{\texttt{BGE-M3}}    & \texttt{shuffled row}\scalebox{0.8}{$\downarrow$} & 0.22 & \multirow{2}{*}{0.35} & 0.22 & +0.00 & 0.24 & \textbf{+0.02} \\
      &   & \texttt{pipe}\scalebox{0.8}{$\uparrow$}         & 0.31 & & 0.31 & +0.00 & 0.31 & +0.00 \\
       \cmidrule(lr){2-3}\cmidrule(lr){4-5}\cmidrule(lr){6-7}\cmidrule(lr){8-9}
      & \multirow{2}{*}{\texttt{SPLADE}}   & \texttt{json}\scalebox{0.8}{$\downarrow$}          & 0.30 & \multirow{2}{*}{0.44} & 0.28 & \textit{$-$0.02}  & 0.31 & \textbf{+0.01} \\
      &    & \texttt{tsv}\scalebox{0.8}{$\uparrow$}         & 0.44 & & 0.39 & $-$0.05 & 0.38 & $-$0.06 \\
    \midrule
    \multirow{8}{*}{\textbf{WikiSQL}}
      & \multirow{2}{*}{\texttt{MPNet}}    & \texttt{html}\scalebox{0.8}{$\downarrow$}         & 0.11 & \multirow{2}{*}{0.24} & 0.17 & \textbf{+0.06} & 0.17 & \textbf{+0.06} \\
      &     & \texttt{tsv}\scalebox{0.8}{$\uparrow$}          & 0.21 & &  0.20 & \textit{$-$0.01} & 0.21 & +0.00 \\
       \cmidrule(lr){2-3}\cmidrule(lr){4-5}\cmidrule(lr){6-7}\cmidrule(lr){8-9}
      & \multirow{2}{*}{\texttt{ReasonIR}} & \texttt{shuffled row}\scalebox{0.8}{$\downarrow$}         & 0.22 & \multirow{2}{*}{0.33} & 0.24 & \textbf{+0.02}  & 0.26 & \textbf{+0.04} \\
      &  & \texttt{xml}\scalebox{0.8}{$\uparrow$}          & 0.34 & &  0.33 & \textit{$-$0.01} &  0.32 & \textit{$-$0.02} \\
       \cmidrule(lr){2-3}\cmidrule(lr){4-5}\cmidrule(lr){6-7}\cmidrule(lr){8-9}
      &  \multirow{2}{*}{\texttt{BGE-M3}}    & \texttt{shuffled row}\scalebox{0.8}{$\downarrow$}          & 0.26 & \multirow{2}{*}{0.37} & 0.26 & +0.00  & 0.27 & \textbf{+0.01} \\
      &   & \texttt{tsv}\scalebox{0.8}{$\uparrow$}         & 0.34 & &  0.34 & +0.00 & 0.34 & +0.00 \\
       \cmidrule(lr){2-3}\cmidrule(lr){4-5}\cmidrule(lr){6-7}\cmidrule(lr){8-9}
      & \multirow{2}{*}{\texttt{SPLADE}}   & \texttt{json}\scalebox{0.8}{$\downarrow$}          & 0.35 & \multirow{2}{*}{0.44} & 0.32 & \textit{$-$0.03}  & 0.36 & \textbf{+0.01} \\
      &    & \texttt{tsv}\scalebox{0.8}{$\uparrow$}         & 0.52 & &  0.44 & \textit{$-$0.08} &  0.45 & \textit{$-$0.07} \\
    \midrule
    \multirow{8}{*}{\textbf{NQ}}
      & \multirow{2}{*}{\texttt{MPNet}}    & \texttt{mschema}\scalebox{0.8}{$\downarrow$}      & 0.01 & \multirow{2}{*}{0.01} & 0.02 & \textbf{+0.01}  & 0.02 & \textbf{+0.01} \\
      &     & \texttt{csv}\scalebox{0.8}{$\uparrow$}          & 0.28 & & 0.22 & \textit{$-$0.06} & 0.25 & \textit{$-$0.03} \\
       \cmidrule(lr){2-3}\cmidrule(lr){4-5}\cmidrule(lr){6-7}\cmidrule(lr){8-9}
      & \multirow{2}{*}{\texttt{ReasonIR}} & \texttt{shuffled col}\scalebox{0.8}{$\downarrow$}          & 0.08 & \multirow{2}{*}{0.29} & 0.10 & \textbf{+0.02}  & 0.11 & \textbf{+0.03} \\
      &  & \texttt{ddl}\scalebox{0.8}{$\uparrow$}         & 0.31 & & 0.29 & \textit{$-$0.02} & 0.31 & +0.00 \\
       \cmidrule(lr){2-3}\cmidrule(lr){4-5}\cmidrule(lr){6-7}\cmidrule(lr){8-9}
      & \multirow{2}{*}{\texttt{BGE-M3}}   & \texttt{shuffled row}\scalebox{0.8}{$\downarrow$}          & 0.12 & \multirow{2}{*}{0.28} & 0.17 & \textbf{+0.05} & 0.13 & \textbf{+0.01} \\
      &    & \texttt{xml}\scalebox{0.8}{$\uparrow$}         & 0.33 & &  0.31 & \textit{$-$0.02} &  0.32 & \textit{$-$0.01} \\
      \cmidrule(lr){2-3}\cmidrule(lr){4-5}\cmidrule(lr){6-7}\cmidrule(lr){8-9}
      & \multirow{2}{*}{\texttt{SPLADE}}   & \texttt{tsv}\scalebox{0.8}{$\downarrow$}          & 0.32 & \multirow{2}{*}{0.26} & 0.17 & \textit{$-$0.15}  & 0.19 & \textit{$-$0.13} \\
      &    & \texttt{csv}\scalebox{0.8}{$\uparrow$}          & 0.33 & &  0.16 & \textit{$-$0.17} & 0.19 & \textit{$-$0.14} \\
    \bottomrule
    \end{tabularx}
    \caption{\small Recall@1 across datasets, models, and selected serializations. For each model, two serializations are shown: the best- ($\uparrow$) and worst-performing ($\downarrow$) under the base model, illustrating the sensitivity range. Centroid is the mean Recall@1 across \emph{all} serializations. $\Delta$ denotes change relative to the base R@1 for that serialization. Full results in Appendix~\ref{section:detailed_recall_score}.}
    \label{tab:main_claim_summary}
\end{table}

            \paragraph{Adapter effects are selective, with clear gains across dense retrievers.}
            Table~\ref{tab:standard_deviation} highlights that the standard deviation and range of Recall@1 across serialization formats decrease significantly with the adapter, indicating reduced sensitivity to serialization choice overall (additional embedding visualizations appear in Figure~\ref{fig:ten_embedding}). Figure~\ref{fig:all_rank_adapter} shows that across all dense retrievers, the adapter mostly yields improvement or remains neutral under joint training, with negative $\Delta_{\log\text{-rank}}$ values confined to serializations that already perform strongly in the base model, most notably \texttt{ReasonIR} on \texttt{xml} (WTQ: $-$0.04, WikiSQL: $-$0.11). Table~\ref{tab:main_claim_summary} illustrates the gains for \texttt{MPNet}: on \textbf{WTQ}-\texttt{html}, Recall@1 rises from 0.09 to 0.18 with joint training and remains at 0.17 with subset training ($\Delta_{\log\text{-rank}}$ of 0.41), and on \textbf{WikiSQL}-\texttt{html} it improves from 0.11 to 0.17 in both regimes ($\Delta_{\log\text{-rank}}$: 0.32). \texttt{BGE-M3} shows a more selective pattern, with modest $\Delta_{\log\text{-rank}}$ values across most WTQ and WikiSQL serializations (0.05--0.14), yet reaching 0.92 and 0.82 on \texttt{shuffled\_rows} and \texttt{shuffled\_cols} for \textbf{NQ-Tables}, confirming that the adapter concentrates its effect where the base representation is least robust.
                        
            \noindent Among the dense models, \texttt{MPNet} shows the most stable positive pattern across datasets. On format-heavy serializations, $\Delta_{\log\text{-rank}}$ gains on \texttt{xml} reach 0.59 (WTQ), 0.69 (WikiSQL), and 0.70 on \texttt{html} for \textbf{NQ-Tables} under joint training. \texttt{ReasonIR} also benefits on structurally perturbed inputs, where \texttt{shuffled\_rows} Recall@1 increases from 0.22 to 0.26 on \textbf{WTQ} and 0.22 to 0.24 on \textbf{WikiSQL} with the joint adapter, reaching 0.29 and 0.26 with subset training. On \textbf{NQ-Tables}, \texttt{shuffled\_rows} reaches $\Delta_{\log\text{-rank}}$ of 0.71 and \texttt{shuffled\_cols} 0.69 under joint training, even where Recall@1 gains are modest (0.08 $\to$ 0.10). The \texttt{ddl} serialization declines slightly under the joint adapter (Recall@1: 0.31 $\to$ 0.29) and is flat under subset training, which is consistent with the VICReg invariance objective pulling embeddings toward the centroid most forcefully when the base representation sits far from it, while the identity term limits disruption when the base serialization already occupies a well-ranked region of the embedding space.
            % Adarsh: 
            \iffalse
                \paragraph{Transfer to unseen data appears in several dense retrieval settings.} The subset-trained adapter provides a direct test of cross-dataset generalization because it is trained on \textbf{WTQ} and \textbf{WikiSQL} but evaluated on unseen \textbf{NQ-Tables}. The results in Table~\ref{tab:main_claim_summary} show that useful transfer is possible for dense retrievers, though its character varies by serialization quality. For \texttt{ReasonIR}, the subset adapter recovers the worst-performing serialization (\texttt{shuffled cols}: 0.08 $\rightarrow$ to  $\rightarrow$ 0.11, $+$0.03) while leaving the strongest unchanged (\texttt{ddl}: 0.31, $\pm$0.00), outperforming joint training on both. For \texttt{BGE-M3}, a similar trade-off emerges, where \texttt{shuffled rows} improves from 0.12 to 0.17 under joint training, but \texttt{xml} degrades from 0.33 to 0.31 (joint) and 0.32 (subset). More broadly, where adaptation has a negative effect, the subset adapter tends to produce smaller degradation than joint training, most clearly for \texttt{MPNet} on \texttt{csv} ($-$0.06 joint vs $-$0.03 subset) and \texttt{ReasonIR} on \texttt{ddl} ($-$0.02 joint vs $\pm$0.00 subset). This is consistent with the VICReg identity term that, without \textbf{NQ-Tables} examples in the training batch, the invariance objective exerts a weaker pull on NQ-specific serializations, leaving the identity term more dominant and thus limiting disruption to already-strong serializations.
            \fi
            \paragraph{Out-of-Distribution generalization to an unseen dataset and mixed serialization perturbations.} The subset-trained adapter, trained only on \textbf{WTQ} and \textbf{WikiSQL} but evaluated on unseen \textbf{NQ-Tables}, provides a direct test of cross-dataset generalization. Results in Table~\ref{tab:main_claim_summary} show that useful transfer is possible, though its character varies by serialization quality. For \texttt{ReasonIR}, the subset adapter recovers the worst-performing serialization (\texttt{shuffled cols}: 0.08 $\rightarrow 0.11)$ while leaving the strongest unchanged (\texttt{ddl}: 0.31), outperforming joint training on both. A similar trade-off holds for \texttt{BGE-M3}, where \texttt{shuffled rows} improves under joint training but \texttt{xml} degrades. More broadly, where adaptation is harmful, the subset adapter consistently produces smaller degradation than joint training, as seen for \texttt{MPNet} on \texttt{csv} and \texttt{ReasonIR} on \texttt{ddl}. This pattern is consistent with the VICReg identity term: without \textbf{NQ-Tables} examples in the training batch, the invariance objective exerts a weaker pull on NQ-specific serializations, leaving the identity term more dominant and thus limiting disruption to already-strong configurations.
            
            We further stress-test retrieval robustness under mixed serialization, where rows within a table use different formats (eg. CSV, TSV, JSON). This perturbation consistently degrades performance in base retrievers. The adapter mitigates this effect and generalizes well despite being trained on all single data. Additional details and results are provided in Appendix~\ref{section:detailed_recall_score}.
       
            \paragraph{Sparse retrieval shows less compatibility with this adapter.} \texttt{SPLADE} shows more limited gains under adaptation. In Table~\ref{tab:main_claim_summary}, Recall@1 on \textbf{WTQ}-\texttt{tsv} changes from 0.44 to 0.39 with the joint adapter and 0.38 with the subset adapter, while \textbf{NQ-Tables}-\texttt{csv} changes from 0.33 to 0.16 and 0.19, respectively. 

            This result is consistent with the mismatch between a dense residual correction mechanism and sparse activation geometry. A bottleneck MLP mixes coordinates and tends to densify sparse activation vectors by introducing low-amplitude nonzero values into previously inactive dimensions. Such transformations can disrupt the sparsity structure that enables effective lexical retrieval. In dense embedding spaces, an interpolated or corrected target can still represent a meaningful semantic intermediate. In sparse lexical spaces, however, averaging across serializations may blend markup artifacts, structural tokens, and lexical signals into a target that is less coherent for inverted-index retrieval. The observed degradation for \texttt{SPLADE} is consistent with this incompatibility. Figure~\ref{fig:ten_embedding}(\textbf{d}) highlights the collapse of adapter transport in the embedding space.             
            
       Hence, we find that serialization sensitivity is large and systematic across retriever families. The lightweight residual adaptation can improve ranking robustness, but primarily in a model- and format-dependent manner rather than as a universal fix. The strongest gains are not concentrated in a single training regime, where joint supervision is often beneficial, while subset training can still transfer meaningfully to unseen data for dense retrievers. Overall, the results suggest that the adapter can improve a frozen retriever in selected settings, with outcomes shaped by retriever geometry, training distribution, and serialization family.
%\vspace{-1.0em}
    \iffalse
\section{Limitations}
        The theoretical argument depends on the centered nuisance decomposition introduced as a modeling assumption. This abstraction is useful for interpreting centroid behavior, but it is not a complete account of real embedding dynamics. In practice, some serializations may alter semantic emphasis rather than acting purely as zero-mean perturbations, especially under truncation or packaging effects. The approach is designed around frozen query embeddings and document-side adaptation. A jointly adapted query-document geometry might perform differently, though at the cost of greater complexity. The method currently targets only dense spaces. The failure on \texttt{SPLADE} is not incidental but points to a boundary condition that future work must address with sparse-native objectives. Exact numerical effect sizes are benchmark- and model-dependent. The present paper focuses on the stability narrative and transport mechanism rather than claiming uniform gains for every dense encoder under all conditions. The paper studies semantic preservation under common serializations and simple structural transformations, but not under all possible table normalizations, truncation strategies, or hybrid table-text packaging schemes.
\fi
 \vspace{-0.5em}
    \section{Conclusion}
 \vspace{-0.5em}
        Table retrieval is still constrained by the need to linearize a two-dimensional object into a one-dimensional sequence, which can introduce meaningful representation bias across formats, models, and benchmarks. Our results show that serialization is therefore not merely a preprocessing detail but a factor that can materially alter retrieval quality in the embedding space.
        
        To address this issue, we introduced a centroid-based perspective on serialization, treating different serializations of the same table as transformed views of a shared semantic object. This led to a lightweight residual bottleneck adapter that, from a single serialization, learns to approximate centroid-level robustness at inference time while keeping the base retriever frozen. Across dense retrievers such as \texttt{MPNet}, \texttt{ReasonIR}, and \texttt{BGE-M3}, this approach yields consistent robustness gains under a VICReg-inspired objective.
        
        Across retrievers, the method improved robustness, with especially strong gains in dense retrieval models, while \texttt{SPLADE} was substantially less compatible with this transport mechanism, suggesting a gap between sparse lexical geometry and the dense correction used here. More broadly, these findings suggest that serialization should be treated as a modeling choice with geometric consequences, and that future work on table retrieval should account for this sensitivity directly rather than assuming a single serialization is sufficient.

    \section{Acknowledgment}
    We are grateful to the members of the Complex Data Analysis and Reasoning Lab for their helpful discussions. We also acknowledge the support of the USA National Science Foundation under grant \#2047488 and the Rensselaer IBM Future of Computing Research Collaboration (FCRC).

    %%\section*{Author Contributions}
    %   \note{need to fill}
    
    %\section*{Acknowledgments}
    %    \note{need to fill}
        
    %\section*{Ethics Statement}
    %    \note{need to fill}

    \bibliography{universal_representation}
    \bibliographystyle{colm2026_conference}

    \newpage
    \appendix
    \section{Dataset, Embedding Models and Representation Details}
    \subsection{Dataset}\label{section:datasets}
        The empirical analysis spans three table retrieval benchmarks with increasing structural difficulty.

        \paragraph{WikiTableQuestions (WTQ)~\citep{pasupatCompositionalSemanticParsing2015}.} \textbf{WTQ} contains more heterogeneous Wikipedia tables and more challenging question semantics. Multi-row headers, irregular formatting, and broader reasoning demands make serialization effects more visible.
                
        \paragraph{WikiSQL ~\citep{zhongSeq2SQLGeneratingStructured2017}.} \textbf{WikiSQL} contains relatively simple tables and SQL-like queries with comparatively tight lexical alignment to the target tables. Serialization sensitivity is present but generally less chaotic because the tables are cleaner and structurally shallower.

        \paragraph{Natural Questions Tables (NQ-Tables)~\citep{herzigOpenDomainQuestion2021}.} \textbf{NQ-Tables} is the harshest stress test because natural user queries often exhibit a large lexical gap from the target table. Structural complexity and weak lexical overlap amplify the consequences of representation instability.
        
        \begin{table}[H]
            \centering
            \begin{tabular}{lrr}
                \toprule
                \textbf{Dataset} & \textbf{Questions} & \textbf{Tables} \\
                \midrule
                \textbf{WTQ} & 4,200 & 2,044 \\
                \textbf{WikiSQL} & 15,878 & 5,069 \\
                \textbf{NQ-Tables} & 966 & 169,898 \\
                \bottomrule
            \end{tabular}
            \caption{Dataset scale reported in the supporting material.}
        \end{table}

    \subsection{Retriever Embeddings Models}\label{section:embedding_models}
    
        The retrievers considered are \texttt{MPNet}, \texttt{BGE-M3}, \texttt{ReasonIR}, and \texttt{SPLADE}. 
        \begin{enumerate}
            \item \texttt{MPNet} is a dense transformer retriever based on permuted masked language modeling, known for producing strong general-purpose sentence and document embeddings~\citep{songMPNetMaskedPermuted2020}. 
            \item \texttt{BGE-M3} is a multilingual and multi-function embedding model designed for dense retrieval, lexical matching, and multi-vector scoring across diverse retrieval settings~\citep{chen-etal-2024-m3}. 
            \item \texttt{ReasonIR} is a retrieval model optimized for reasoning-intensive search tasks, with a stronger emphasis on compositional and multi-hop query understanding \citep{shao2025reasonir}.
            \item \texttt{SPLADE} is a learned sparse retriever that projects text into a high-dimensional lexical space, enabling retrieval through sparse term expansion and inverted-index style matching rather than dense vector similarity ~\citep{formalSPLADESparseLexical2021}. 
        \end{enumerate}

    \subsection{Serialization Formats}\label{section:serialization_format}
        Serialization methods considered in this work are summarized in Table~\ref{table:representation_methods}. As shown in the table, they cover multiple ways of expressing the same underlying dataframe, ranging from flat linearizations such as \texttt{pipe\_serialized}, \texttt{token\_serialized}, and \texttt{space\_serialized}, to standard data exchange formats including \texttt{csv}, \texttt{tsv}, \texttt{html}, \texttt{markdown}, \texttt{latex}, \texttt{dict}, \texttt{json}, and \texttt{xml}. The table also includes structural variants such as \texttt{shuffled\_rows}, \texttt{shuffled\_cols}, and \texttt{transpose}, which preserve the underlying values while altering layout or ordering, as well as schema-oriented forms such as \texttt{mschema}, \texttt{macschema}, and \texttt{ddl}, which emphasize metadata and type information in addition to cell contents. These representations capture differences in delimiter choice, structural explicitness, schema visibility, and ordering, providing a broad set of transformations for studying how tabular data can be rendered across heterogeneous formats.
        {
            \small
            \setlength{\tabcolsep}{8pt}
            \renewcommand{\arraystretch}{1.5}
            \begin{longtable}{p{0.17\linewidth} p{0.18\linewidth} p{0.55\linewidth}}
            
            \hline
            \textbf{Category} & \textbf{Representation} & \textbf{Brief template example} \\
            \hline
            \endfirsthead
            
            \hline
            \textbf{Category} & \textbf{Representation} & \textbf{Brief template example} \\
            \hline
            \endhead
            
            \hline
            \endfoot
            
            \hline
            \endlastfoot
            
            \multirow{4}{=}{Popular Representation}
            & \texttt{pipe\_serialized}
            & \texttt{col1 | col2 | val11 | val12 | val21 | val22} \\
            
            & \texttt{token\_serialized}
            & \texttt{<Header, 0, 0> col1 <Header, 0, 1> col2 <CellValue, 1, 0> val11 <CellValue, 1, 1> val12} \\
            
            & \texttt{space\_serialized}
            & \texttt{col1 col2 val11 val12 val21 val22} \\
            \hline
            
            \multirow{9}{=}{Data Representation}
            & \texttt{csv}
            & \texttt{,col1,col2 \textbackslash n0, val11, val12 \textbackslash n1,val21,val22} \\
            
            & \texttt{tsv}
            & \texttt{col1 \textbackslash tcol2 \textbackslash n val11 \textbackslash t val12 \textbackslash nval21 \textbackslash t val22} \\
            
            & \texttt{html}
            & \texttt{<table> <tr> <th>col1</th> <th>col2</th> </tr> <tr> <td>val11</td> <td>val12</td> </tr> </table>} \\
            
            & \texttt{markdown}
            & \texttt{| col1 | col2 | \textbackslash n|---|---| \textbackslash n| val11 | val12 |} \\
            
            & \texttt{latex}
            & \texttt{\textbackslash begin\{tabular\}\{ll\} col1 \& col2 \textbackslash\textbackslash\ val11 \& val12 \textbackslash\textbackslash\ \textbackslash end\{tabular\}} \\
            
            & \texttt{dict}
            & \texttt{\{'col1': \{0: 'val11', 1: 'val21'\}, 'col2': \{0: 'val12', 1: 'val22'\}\}} \\
            
            & \texttt{json}
            & \texttt{\{"col1": \{"0": "val11", "1": "val21" \}, "col2" : \{"0": "val12", "1": "val22"\}\}} \\
            
            & \texttt{xml}
            & \texttt{<data> <row> <col1> val11 </col1> <col2> val12 </col2> </row> </data>} \\
    
            \hline
            
            \multirow{4}{=}{Structural Transformations}
            & \texttt{shuffled\_rows}
            & \texttt{[row2; row1; row3; ...]} \\
            
            & \texttt{shuffled\_cols}
            & \texttt{[col2, col1, col3, ...]} \\
            
            & \texttt{transpose}
            & \texttt{0 1 col1 val11 val21 col2 val12 val22} \\
            
            \hline
            
            \multirow{4}{=}{Schema and Definition Types}
            & \texttt{mschema}
            & \texttt{\{`schema': DataFrameSchema(...), `data': [\{`col1': val11, `col2': val12\}]\}} \\
            
            & \texttt{macschema}
            & \texttt{\{`fields': [...], `primaryKey': [...], `data': [\{`col1': val11, `col2': val12\}]\}} \\
            
            & \texttt{ddl}
            & \texttt{CREATE TABLE Table (col1 TEXT, col2 TEXT); INSERT INTO Table VALUES (`val11',`val12');} \\
            
            \hline            
            \caption{Table representation methods grouped by a specific category}
            \label{table:representation_methods} 
            \end{longtable}
        }
    
        \paragraph{\texttt{pipe\_serialized}}
            Linearized table where headers and row values are concatenated using the pipe symbol. Produced inside \texttt{serialize\_dataframe(...)}. The implementation does not insert row delimiters, so the output is a single flat string rather than a clean row-wise format.
            
        \paragraph{\texttt{token\_serialized}}
            Tokenized linearization with explicit structural markers for headers and cells. Each header is encoded as \texttt{<Header, 0, j>} and each cell as \texttt{<CellValue, i, j>}. This is the most structurally explicit of the custom serializations.
            
        \paragraph{\texttt{space\_serialized}}
            Space-separated linearization of headers and cell values. In the code this is returned as \texttt{none\_serialized}, so this label is an external naming choice rather than the function's native name.
            
        \paragraph{\texttt{csv}}
            Standard comma-separated values export using \texttt{df.to\_csv()}. This is a plain-text tabular serialization with delimiters and optional row index.
            
        \paragraph{\texttt{tsv}}
            Tab-separated values export using \texttt{df.to\_csv(sep="\textbackslash t")}. Same idea as CSV but with tab delimiters.
            
        \paragraph{\texttt{html}}
            HTML table serialization using \texttt{df.to\_html()}. Preserves table structure with tags such as \texttt{<table>}, \texttt{<tr>}, and \texttt{<td>}.
            
        \paragraph{\texttt{markdown}}
            Markdown table serialization using \texttt{df.to\_markdown()}. Human-readable, lightweight, and common in documentation.
            
        \paragraph{\texttt{latex}}
            LaTeX tabular serialization using \texttt{df.to\_latex()}. Useful for papers, though the exact formatting depends on pandas defaults and options.
            
        \paragraph{\texttt{dict}}
            Python dictionary representation using \texttt{df.to\_dict()}. By default, pandas returns a column-oriented dictionary mapping each column to an index-value mapping.
            
        \paragraph{\texttt{json}}
            JSON serialization using \texttt{df.to\_json()}. By default, pandas uses a column-oriented JSON structure unless configured otherwise.
            
        \paragraph{\texttt{xml}}
            XML serialization using \texttt{df.to\_xml(...)}. Produces a hierarchical markup representation of rows and fields.
            
        \paragraph{\texttt{shuffled\_rows}}
            Row permutation using \texttt{df.sample(frac=1).reset\_index(drop=True)}. Preserves schema and values but changes row order.
            
        \paragraph{\texttt{shuffled\_cols}}
            Column permutation using \texttt{df.sample(frac=1, axis=1)}. Preserves the data but changes column order.
            
        \paragraph{\texttt{transpose}}
            Matrix-style transposition using \texttt{df.T}. Rows become columns and columns become rows.

        \paragraph{\texttt{mschema}}
            Pandera-based schema-plus-data representation. The function infers a Pandera column type for each dataframe column and returns both the schema object and row-oriented records.
            
        \paragraph{\texttt{macschema}}
            Metadata-aware schema using pandas JSON Table Schema via \texttt{pd.io.json.build\_table\_schema(df)}, then augmented with row data. This is closer to a machine-readable schema specification than a plain serialization.
            
        \paragraph{\texttt{ddl}}
            SQL-style schema and instance representation. The function emits a \texttt{CREATE TABLE} statement followed by \texttt{INSERT INTO} statements for all rows.

    \subsubsection*{Centroid Representations}
        To complement the individual serialization formats described above, we also define centroid representations for each category and an aggregate centroid across all formats. Each centroid is computed as the average embedding across all serializations in that category and is intended to capture the prototypical semantic signature of that group of representations.
        \paragraph{\texttt{centroid\_popular}(\textbf{\texttt{*POPULAR}}).}
        The Popular Representation centroid averages the embeddings of \texttt{pipe\_serialized}, \texttt{token\_serialized}, and \texttt{space\_serialized}. These formats share a common characteristic of flat, delimiter-driven linearization with minimal structural overhead, so their centroid reflects a compact, token-level encoding of tabular content where column-row relationships are implied by position or lightweight delimiters rather than explicit markup.
        \paragraph{\texttt{centroid\_data}(\textbf{\texttt{*DATA}}).}
        The Data Representation centroid averages the embeddings of \texttt{csv}, \texttt{tsv}, \texttt{html}, \texttt{markdown}, \texttt{latex}, \texttt{dict}, \texttt{json}, and \texttt{xml}. This is the most heterogeneous category, spanning plain-text delimited formats, markup languages, and structured data interchange formats. Despite their surface differences, these serializations all encode the full table content with explicit structural boundaries between cells and rows. Their centroid thus represents a format-neutral, content-complete embedding of the underlying tabular data, averaging across delimiter styles and nesting conventions.
        \paragraph{\texttt{centroid\_structural}(\textbf{\texttt{*STRUCTURAL}}).}
        The Structural Transformations centroid averages the embeddings of \texttt{shuffled\_rows}, \texttt{shuffled\_cols}, and \texttt{transpose}. Unlike the other categories, these representations do not introduce new encoding vocabularies but instead permute the layout of existing data. Their centroid captures how positional reordering and axis transposition affect the embedding space relative to a canonical ordering, and serves as a measure of a retrieval model's sensitivity or robustness to row, column, and axis permutations.
        \paragraph{\texttt{centroid\_schema}(\textbf{\texttt{*SCHEMA}}).}
        The Schema Definition Types centroid averages the embeddings of \texttt{mschema}, \texttt{macschema}, and \texttt{ddl}. These formats emphasize structural metadata — column types, primary keys, and table definitions — alongside or in lieu of raw cell values. Their centroid encodes a schema-oriented view of the table, weighting type information and relational structure more heavily than the verbatim content of individual cells, and reflects how a retrieval model responds to declarative rather than instance-level descriptions of tabular data.
        \paragraph{\texttt{centroid\_all}(\textbf{\texttt{*ALL}}).}
        Finally, the \texttt{centroid\_all} representation is computed as the average embedding across all serialization formats from all four categories. 
                    
    \section{Evaluation Metrics}\label{sec:evaluation_metric}

We evaluate retrieval performance using three complementary metrics.

\paragraph{Recall@1.}
We use Recall@1 to validate rank-based metrics, it measures whether the gold table is ranked first by the retriever. Formally, for a question $q$ with gold table $t^*$:
\begin{equation}
    \text{Recall@1} = \mathbb{1}\bigl[\text{rank}(t^* \mid q) = 1\bigr]
\end{equation}
This is a strict measure of retrieval precision and directly reflects whether downstream QA task can proceed without re-ranking. We compare the standard deviation and min difference of recall values to validate the sensitivity of different representations. 

\paragraph{Rank and Pairwise Score Difference.}
We report the raw rank $r = \text{rank}(t^* \mid q)$ of the gold table within the retrieved list, which captures retrieval quality beyond the binary Recall@1 
signal and is more sensitive to near-miss failures. To compare serialization formats directly, we further compute the mean pairwise score difference between 
two representations $s_i$ and $s_j$ over all questions:
\begin{equation}
    \bar{\delta}(s_i, s_j) = \frac{1}{|Q|} \sum_{q \in Q} 
    \bigl[ \text{score}(t^* \mid q, s_i) - \text{score}(t^* \mid q, s_j) \bigr]
    \label{eq:mean_diff}
\end{equation}
where $\text{score}(t^* \mid q, s)$ is the retrieval score assigned to the gold table $t^*$ under serialization $s$ for question $q$. A positive 
$\bar{\delta}(s_i, s_j)$ indicates that serialization $s_i$ yields higher retrieval scores than $s_j$ on average, and vice versa.
Unlike Recall@1, which collapses all retrieval outcomes outside the top position into a single failure signal, the pairwise score difference retains
the full resolution of the retriever's scoring function and is sensitive to systematic preferences between serializations even when both fail to rank the 
gold table first. This makes it a more discriminative basis for comparing representations, particularly in settings where Recall@1 differences are small 
or statistically noisy.

\paragraph{Log-Rank Delta.}
To quantify the effect of serialization-aware adaptation, we define the log-rank improvement as:
\begin{equation}
    \Delta_{\log\text{-rank}} = \log(1 + r_{\text{base}}) - \log(1 + r_{\text{adapted}})
    \label{eq:log_rank_delta}
\end{equation}
where $r_{\text{base}}$ and $r_{\text{adapted}}$ denote the rank of the gold table under the base and adapted retriever, respectively. A positive value 
indicates that adaptation improves retrieval, while a negative value indicates degradation. The logarithmic transformation dampens the influence of large rank 
values in the tail of the distribution. We report the mean $\Delta_{\log\text{-rank}}$ grouped by serialization format and model.
Recall@1 is insensitive to the magnitude of rank changes: a gold table moving from rank 100 to rank 2 and from rank 2 to rank 1 contribute equally to 
Recall@1, yet represent qualitatively different degrees of improvement. $\Delta_{\log\text{-rank}}$ addresses this by measuring adaptation gains 
continuously across the full rank distribution, while the logarithmic compression ensures that large absolute rank differences in the tail do not 
dominate the comparison, yielding a more balanced and informative signal for evaluating serialization sensitivity.

\paragraph{Measuring the Format-Specific Shift.}
To empirically test the centering condition in Assumption~\ref{Section:assumption1}, we decompose each format-specific shift $\delta_s(T)$ into two parts that reveal whether the shift depends on the table content or on the format alone. For a serialization format $s$, the format-specific shift for table $T$ relative to the global centroid is $\delta_s(T) = z_s(T) - c(T)$. We separate this shift into a table-independent component, obtained by averaging across all tables:
\begin{equation}
    \mu_{\delta_s} = \frac{1}{|\mathcal{T}|} \sum_{T \in \mathcal{T}} \delta_s(T),
    \label{eq:systematic_bias}
\end{equation}
and a table-dependent component $\epsilon_s(T) = \delta_s(T) - \mu_{\delta_s}$, which captures the portion of the shift that changes from table to table. The table-independent component $\mu_{\delta_s}$ represents the part of the format-specific shift that stays the same regardless of what the table contains, the displacement that format $s$ introduces for every table it encodes. We summarize the table-dependent component by its mean magnitude across tables:
\begin{equation}
    \bar{\epsilon}_s = \frac{1}{|\mathcal{T}|} \sum_{T \in \mathcal{T}} \|\epsilon_s(T)\|.
    \label{eq:random_residual}
\end{equation}
The norm $\|\mu_{\delta_s}\|$ measures how large the table-independent part of the format-specific shift is. The mean $\bar{\epsilon}_s$ measures how much the shift varies from table to table beyond that constant component. The ratio $\|\mu_{\delta_s}\| / \bar{\epsilon}_s$ connects directly to whether Assumption~\ref{Section:assumption1} holds for a given format. Centroid averaging cancels format-specific shifts by summing them across formats. Shifts that change from table to table cancel efficiently because they do not share a common direction, and their aggregate contribution shrinks as the serialization family grows. Shifts that remain the same across tables do not cancel because they displace every table's embedding in the same way, and averaging two such displacements that differ in orientation produces a shifted centroid rather than a corrected one. When $\|\mu_{\delta_s}\| / \bar{\epsilon}_s > 1$, the format-specific shift is driven primarily by the format itself, and centroid averaging cannot remove it. When $\|\mu_{\delta_s}\| / \bar{\epsilon}_s < 1$, the shift varies enough from table to table that averaging suppresses it as Assumption~\ref{Section:assumption1} predicts. We report these quantities per format, retriever, and dataset in Appendix~\ref{appendix:residual_analysis}.

\paragraph{Statistical Significance.}
To assess whether differences in rank across serialization formats are statistically reliable, we apply pairwise Wilcoxon signed-rank tests over 
per-question ranks, corrected for multiple comparisons using the Benjamini--Hochberg false discovery rate (FDR) procedure 
\citep{benjaminiControllingFalseDiscovery1995}. We consider a difference significant at $\alpha = 0.01$.
    \section{Scope of the Format-Specific Shift}\label{appendix:residual_analysis}
\newpage
\begin{figure}[ht]
    \centering
    \includegraphics[width=\textwidth]{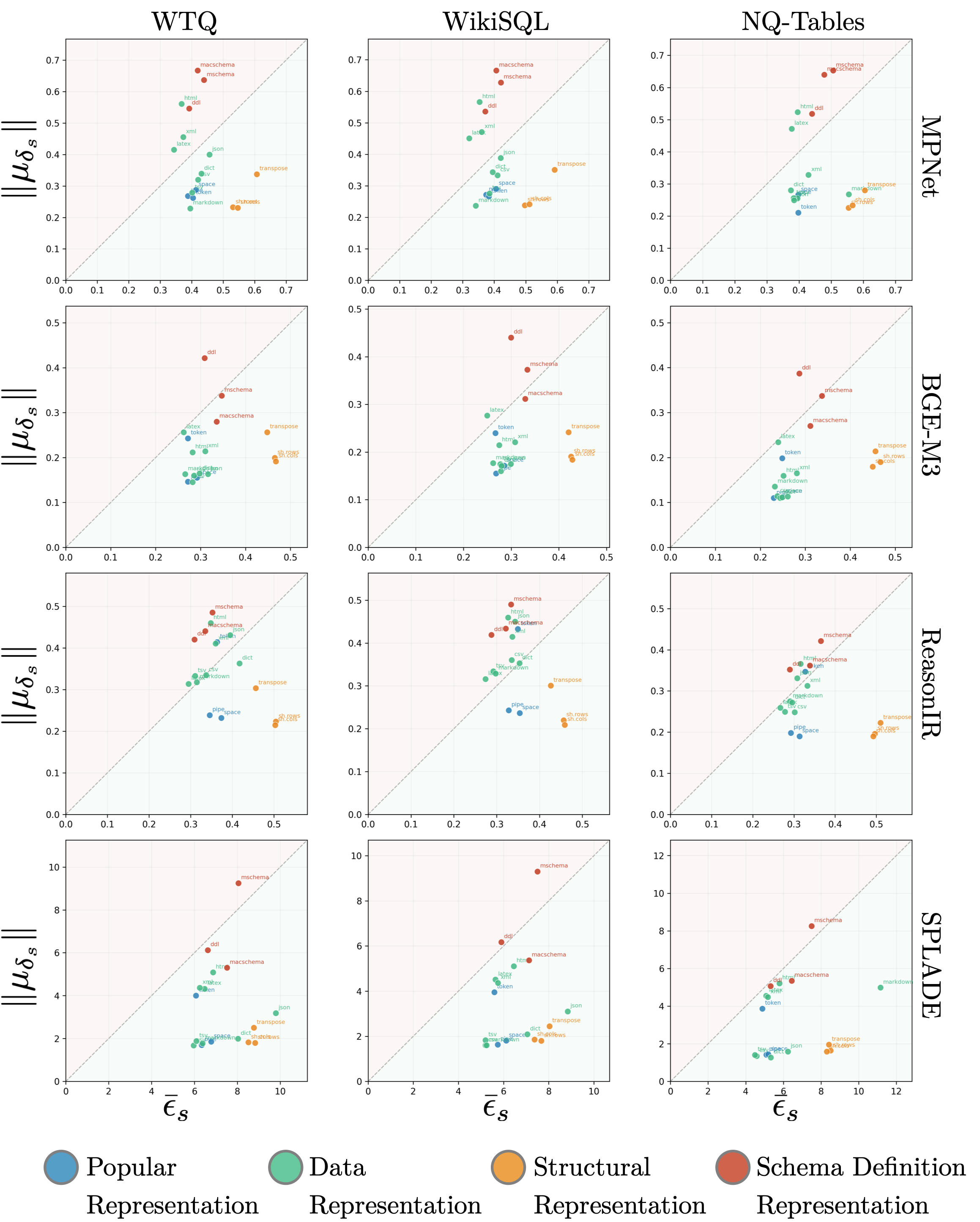}
    \caption{Analysis of the format-specific shift $\delta_s(T)$ into its 
    table-independent component $\|\mu_{\delta_s}\|$ (Equation~\ref{eq:systematic_bias}) and table-dependent component 
    $\bar{\epsilon}_s$ (Equation~\ref{eq:random_residual}), 
    shown per serialization format across four retrievers and three datasets. 
    Each point represents one format, colored by its serialization family. 
    The diagonal marks $\|\mu_{\delta_s}\| / \bar{\epsilon}_s = 1$. Formats 
    above the diagonal carry format-specific shifts that remain largely the 
    same regardless of table content, violating the centering condition in 
    Assumption~\ref{Section:assumption1}. Formats below the diagonal carry 
    shifts that vary from table to table, which centroid averaging can 
    suppress.}
    \label{fig:systematic_vs_random}
\end{figure}
\newpage

Across the dense retrievers, we find a consistent pattern. Schema-based formats appear above the diagonal in most panels, showing that they introduce a stable format-specific shift rather than harmless noise. For \texttt{MPNet}, \texttt{mschema} and \texttt{macschema} reach $|\mu_{\delta_s}| \approx 0.65$ while $\bar{\epsilon}_s$ stays around (0.40) to (0.45), giving ratios of about (1.45) to (1.59). This shows that these formats push tables in a similar direction regardless of content. Markup-heavy formats such as \texttt{html}, \texttt{latex}, and \texttt{xml} also lie above the diagonal for \texttt{MPNet}, though to a lesser extent.

\texttt{BGE-M3} shows the same overall structure but with smaller magnitudes. Most formats lie closer to the origin, which helps explain their weaker sensitivity to the choice of serialization in retrieval. The main exception is \texttt{ddl}, whose ratio remains above 1 across datasets, suggesting that SQL-style schema definitions still occupy a distinct region of the embedding space.

We also find that \texttt{ReasonIR} is more sensitive to formatting structure than \texttt{BGE-M3}. On \textbf{WTQ}, several data-oriented formats, including \texttt{token serialized}, \texttt{tsv}, \texttt{html}, \texttt{markdown}, \texttt{latex}, \texttt{json}, and \texttt{xml}, move close to or above the boundary. This suggests that even lightweight formatting choices can introduce a table-independent directional effect. In contrast, the shuffled variants remain clearly below the diagonal, indicating table-specific noise that averaging can reduce.

\texttt{SPLADE} shows a very different pattern. Both $|\mu_{\delta_s}|$ and $\bar{\epsilon}_s$ are much larger, and every format falls above the diagonal. We interpret this as evidence that, in the sparse lexical setting, all serialization choices produce systematic directional effects that dominate random variation. In this case, the centering condition does not hold.

The structural variants provide the clearest case where the assumption works as intended. \texttt{shuffled rows} and \texttt{shuffled columns} consistently show low systematic bias and higher random residuals, which matches the kind of perturbation that centroid averaging can suppress. \texttt{transpose} lies slightly higher because it introduces a more regular structural change.

Finally, the relative layout of formats remains highly stable across datasets within each retriever. We take this as evidence that the observed biases are mainly properties of the retriever-format interaction rather than artifacts of a particular dataset. This result supports the broader claim that a bias-correction method learned on one dataset can transfer, at least partially, to another.
    \section{Proofs}
    \subsection{Euclidean optimality of the centroid}\label{thm:centroid_euclidean_proof}
        \textbf{Theorem \ref{thm:centroid_euclidean}}~(Euclidean optimality of the centroid). For any table $T$ with serialization family $\Sset(T)$, the centroid
        \[
            c(T) = \frac{1}{|\Sset(T)|} \sum_{s \in \Sset(T)} z_s(T)
        \]
        is the unique point in $\R^d$ closest to all serialization embeddings simultaneously, in the sense that it minimizes
        \[
            \sum_{s \in \Sset(T)} \norm{u - z_s(T)}_2^2
        \]
        over all $u \in \R^d$. Equivalently, for every $u \in \R^d$,
        \[
            \sum_{s \in \Sset(T)} \norm{c(T) - z_s(T)}_2^2
            \le
            \sum_{s \in \Sset(T)} \norm{u - z_s(T)}_2^2.
        \]
        
        \begin{proof} 
        Define
        \[
            J(u) := \sum_{s \in \Sset(T)} \norm{u - z_s(T)}_2^2.
        \]
        Expanding gives
        \[
            J(u)
            =
            |\Sset(T)|\,\norm{u}_2^2
            -
            2u^\top \sum_{s \in \Sset(T)} z_s(T)
            +
            \sum_{s \in \Sset(T)} \norm{z_s(T)}_2^2.
        \]
        Taking the gradient with respect to $u$,
        \[
            \nabla_u J(u)
            =
            2|\Sset(T)|u
            -
            2\sum_{s \in \Sset(T)} z_s(T).
        \]
        Setting the gradient to zero yields
        \[
            u^\star
            =
            \frac{1}{|\Sset(T)|}\sum_{s \in \Sset(T)} z_s(T)
            =
            c(T).
        \]
        Because $J(u)$ is strictly convex in $u$, this stationary point is the unique global minimizer. The inequality follows immediately from global optimality.
        \end{proof}
        
        Theorem~\ref{thm:centroid_euclidean} shows that the centroid is the least-squares optimal representative of the finite set of serialization-specific embeddings. This fact is unconditional: regardless of whether the semantic decomposition is exact, the centroid is the most central Euclidean representative of the serialization family.

    \subsection{Conditional Semantic Interpretation}\label{prop:semantic_recovery_proof}
    \textbf{Proposition \ref{prop:semantic_recovery}}~(Approximate recovery of the shared component). 
                 We define the average format-specific shift
                \[
                    \bar{\delta}(T) := \frac{1}{|\Sset(T)|}\sum_{s \in \Sset(T)} \delta_s(T).
                \]
                Then
                \[
                    c(T)=\mu(T)+\bar{\delta}(T).
                \]
                In particular, when the format-specific shifts roughly cancel across $\Sset(T)$, then
                \[
                    c(T)\approx \mu(T).
                \]
                
                \begin{proof}
                By the definition of the centroid and the decomposition in Assumption~\ref{Section:assumption1},
                \[
                    c(T)
                    =
                    \frac{1}{|\Sset(T)|}\sum_{s \in \Sset(T)} z_s(T)
                    =
                    \frac{1}{|\Sset(T)|}\sum_{s \in \Sset(T)} \bigl(\mu(T)+\delta_s(T)\bigr).
                \]
                Since $\mu(T)$ does not depend on $s$, this becomes
                \[
                    c(T)
                    =
                    \mu(T)+\frac{1}{|\Sset(T)|}\sum_{s \in \Sset(T)} \delta_s(T)
                    =
                    \mu(T)+\bar{\delta}(T).
                \]
                Therefore, when the average perturbation $\bar{\delta}(T)$ is small, the centroid is correspondingly close to $\mu(T)$.
                \end{proof}
    \section{VICReg Loss Function}\label{section:loss_function}
    \subsection{Identity Preservation}
    
        The identity term anchors the transported representation to the original embedding so that the adapted document vector remains compatible with frozen query vectors:
        \[
            \mathcal{L}_{\mathrm{id}} = \frac{1}{n}\sum_{i=1}^n \left(1 - \frac{z_i^\top e_i}{\norm{z_i}_2\norm{e_i}_2}\right).
        \]
        This discourages large geometric drift. Since only document-side table embeddings are adapted, preserving alignment with the original space is essential.

    \subsection{Invariance to the Centroid}

        The core transport objective maps each view-specific embedding toward the corresponding centroid:
        \[
            \mathcal{L}_{\mathrm{inv}} = \frac{1}{|\mathcal{B}|}\sum_{t \in \mathcal{B}} \left[\frac{1}{n_t}\sum_{i \in S_t} \norm{z_i - \mathrm{sg}(c_t)}_2^2 \right],
        \]
        where $S_t$ indexes the serializations of table $t$ in the batch and $\mathrm{sg}(\cdot)$ denotes stop-gradient on the centroid target. This term suppresses view-specific noise by explicitly collapsing multiple serializations toward a shared semantic anchor.
        
    \subsection{Variance Regularization}

        To prevent dimensional collapse, the standard deviation of each feature across the batch is pushed above a threshold $\gamma$ using a hinge-like penalty:
        \[
            \mathcal{L}_{\mathrm{var}} = \frac{1}{d}\sum_{j=1}^d \max\bigl(0, \gamma - \mathrm{Std}(z_{:,j})\bigr)^2.
        \]
        This ensures that the representation maintains sufficient spread for nearest-neighbor retrieval.
        
    \subsection{Covariance Regularization}

        To preserve feature capacity and discourage redundancy, the covariance of centered adapted embeddings is decorrelated:
        \[
            \mathcal{L}_{\mathrm{cov}} = \frac{1}{d(d-1)}\sum_{p \neq q} \mathrm{Cov}(Z_c)_{pq}^2,
        \]
        where $Z_c$ denotes the batch of centered adapted embeddings. Minimizing off-diagonal covariance reduces feature entanglement and encourages a more expressive geometry.

     \begin{table}[H]
            \centering
            \begin{tabularx}{\textwidth}{>{\raggedright\arraybackslash}p{2.5cm} >{\raggedright\arraybackslash}X >{\raggedright\arraybackslash}X}
            \toprule
                \textbf{Component} & \textbf{Primary function} & \textbf{Geometric implication} \\
                \midrule
                Identity($ \mathcal{L}_{\mathrm{id}}$) & Keeps the adapted embedding close to the original signal. & Prevents severe misalignment with frozen query embeddings. \\
                Invariance($\mathcal{L}_{\mathrm{inv}}$) & Pulls disparate serializations of the same table toward one centroid. & Cancels format-specific noise and enforces representation robustness. \\
                Variance($\mathcal{L}_{\mathrm{var}}$) & Maintains minimum spread across feature dimensions. & Prevents dimensional collapse into an indistinguishable cluster. \\
                Covariance($\mathcal{L}_{\mathrm{cov}}$) & Penalizes feature redundancy. & Decorrelates coordinates and preserves usable embedding capacity. \\
                \bottomrule
            \end{tabularx}
            \caption{Interpretation of the adapter loss components.}
            \label{table:loss_interpretation}
        \end{table}

        Table~\ref{table:loss_interpretation} provides a detailed purpose and geometric interpretation of  each of the loss function. 
    \section{Training Procedure}\label{section:training_regime}

\begin{figure*}[t]
    \centering
    \includegraphics[width=\linewidth]{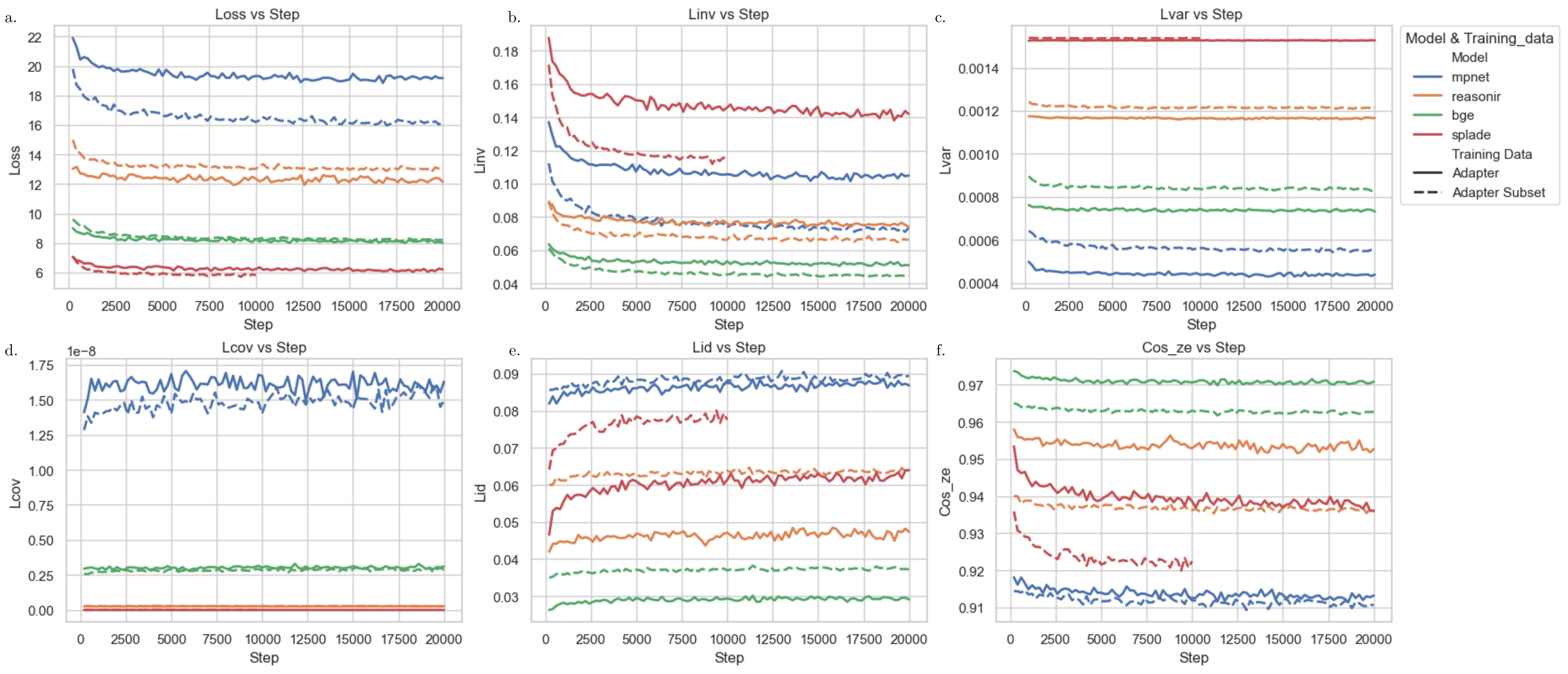}
    \caption{Training trajectories of the adapter under the loss weighting in Table~\ref{tab:train-hparams}, with $\lambda_{\mathrm{inv}}$, $\lambda_{\mathrm{var}}$, $\lambda_{\mathrm{cov}}$, and $\lambda_{\mathrm{id}}$.}
\label{fig:training_trajectory}
\end{figure*}

Figure~\ref{fig:training_trajectory} is consistent with the relative weighting of the training objective. In particular, the two dominant coefficients are $\lambda_{\mathrm{inv}}=100$ and $\lambda_{\mathrm{id}}=100$, so training is primarily driven by a balance between cross-view alignment and preservation of the original embedding geometry. This is exactly what the curves show. The invariance loss decreases sharply during the early stage of training, indicating that the adapter quickly learns to reduce discrepancies among different serialized views of the same table. At the same time, the identity loss rises slightly and then plateaus, while the cosine similarity between adapted and original embeddings remains high. This behavior is expected under a strong identity regularizer: the model is allowed to move embeddings enough to improve view consistency, but not enough to drift far from the frozen retriever space.

The smaller coefficient on the variance term, $\lambda_{\mathrm{var}}=25$, means that variance preservation acts as a secondary constraint rather than the main optimization target. Accordingly, $L_{\mathrm{var}}$ remains relatively stable throughout training instead of changing dramatically. This suggests that the representation maintains sufficient per-dimension spread and avoids collapse while the stronger invariance term drives alignment. The covariance term has the smallest weight, $\lambda_{\mathrm{cov}}=1$, so it mainly serves as a light regularizer against redundant feature correlations. Consistent with this role, $L_{\mathrm{cov}}$ stays very small across training and does not dominate the optimization trajectory.

Overall, the training dynamics reflect the intended prioritization of the objective: $\lambda_{\mathrm{inv}}$ pushes different views together, $\lambda_{\mathrm{id}}$ prevents overly aggressive deformation of the frozen space, $\lambda_{\mathrm{var}}$ stabilizes feature dispersion, and $\lambda_{\mathrm{cov}}$ lightly suppresses redundancy. The result is a fast early drop in total loss, stable non-collapsed training, and a conservative adaptation that improves view agreement while preserving high similarity to the original embeddings.
    \section{Implementation Details}\label{section:implementation_detail}
    \subsection{Algorithmic Summary}\label{section:algorithm_summary}
    \noindent\textbf{Input:} frozen retriever $f$, representation set $\mathcal{R}$, cached table-embedding corpora from one or more datasets $\mathcal{D}_1,\dots,\mathcal{D}_m$, adapter $A_{\theta}$, and training configuration $\mathrm{cfg}$. \\
    \textbf{Training:}
    \begin{enumerate}
        \item For each dataset, load cached embeddings for all available table representations $r \in \mathcal{R}$ and align them by table ID, retaining only tables that have at least two available representations.
        \item Build a multi-view training set where each sample corresponds to one table and contains up to $\texttt{max\_views}$ representation embeddings $\{e_i\}$ drawn from its aligned cached views.
        \item For each minibatch, collect all view embeddings $e_i$, their associated table IDs, and compute adapted embeddings
        \[
            z_i = A_{\theta}(e_i).
        \]
        \item $\ell_2$-normalize the adapted embeddings:
        \[
            z_i \leftarrow \frac{z_i}{\|z_i\|_2}.
        \]
        \item For each table appearing in the minibatch, compute its batch centroid in the adapted space:
        \[
            c(T)=\frac{1}{|\mathcal{V}(T)|}\sum_{i \in \mathcal{V}(T)} z_i,
        \]
        where $\mathcal{V}(T)$ denotes the views of table $T$ present in the minibatch.
        \item Normalize the original frozen embeddings for identity preservation:
        \[
            \hat e_i = \frac{e_i}{\|e_i\|_2}.
        \]
        \item Optimize the adapter parameters using
        \[
            \mathcal{L}
            =
            \lambda_{\mathrm{inv}}\mathcal{L}_{\mathrm{inv}}(z,\text{table IDs})
            +
            \lambda_{\mathrm{var}}\mathcal{L}_{\mathrm{var}}(z)
            +
            \lambda_{\mathrm{cov}}\mathcal{L}_{\mathrm{cov}}(z)
            +
            \lambda_{\mathrm{id}}\mathcal{L}_{\mathrm{id}}(z,\hat e),
        \]
        where $\mathcal{L}_{\mathrm{inv}}$ pulls views of the same table toward their within-batch centroid, $\mathcal{L}_{\mathrm{var}}$ enforces feature-wise variance above a threshold, $\mathcal{L}_{\mathrm{cov}}$ penalizes feature redundancy, and $\mathcal{L}_{\mathrm{id}}$ preserves similarity to the frozen embedding space.
        \item Update $\theta$ with AdamW, apply gradient clipping, and periodically save checkpoints and training logs.
    \end{enumerate}
    
    \textbf{Inference:}
    \begin{enumerate}
        \item Serialize each table once using a chosen base representation and compute its frozen embedding
        \[
            e = f(\mathrm{ser}(T)).
        \]
        \item Produce the adapted table vector
        \[
            z = A_{\theta}(e).
        \]
        \item Index $z$ in the vector database, while queries remain encoded by the frozen retriever.
    \end{enumerate}
    \newpage
    \subsection{Hyperparameters}
        \begin{table}[ht]
            \centering
            \begin{tabular}{ll}
                \hline
                \textbf{Hyperparameter} & \textbf{Value} \\
                \hline
                steps & 20000 \\
                batch\_size & 512 \\
                lr & $3\times 10^{-4}$ \\
                weight\_decay & $1\times 10^{-4}$ \\
                device & \texttt{cuda if available, else cpu} \\
                max\_views & \texttt{len(REPRESENTATIVE\_ORDER)} \\
                lam\_inv & 100.0 \\
                lam\_var & 25.0 \\
                lam\_cov & 1.0 \\
                lam\_id & 100.0 \\
                id\_mode & \texttt{cos} \\
                gamma\_std & 0.05 \\
                hidden\_mult & 4 \\
                r & 512 \\
                use\_bias & \texttt{True} \\
                alpha & 0.01 \\
                dropout & 0.05 \\
                log\_every & 200 \\
                ckpt\_every & 200 \\
                \hline
            \end{tabular}
            \caption{Training hyperparameters.}
            \label{tab:train-hparams}
        \end{table}

    \newpage
    \section{Additional Results}\label{section:detailed_recall_score}
\iffalse
\begin{figure*}[t]
    \centering
    \includegraphics[width=\linewidth]{Figure/adapter/Universal_Representation_All.pdf}
    \caption{Recall@1 across table serialization methods and model families on three benchmarks: (\textbf{a}) \textbf{WTQ}, (\textbf{b}) \textbf{WikiSQL}, and (\textbf{c}) \textbf{NQ Tables}. Results are grouped by representation category, including popular representations, data representations, structural transformations, and schema definition types, with an overall summary shown in the final column. Each group compares the base model, a jointly trained adapter trained on the union of \textbf{WTQ}, \textbf{WikiSQL}, and \textbf{NQ Tables}, and a subset-trained adapter trained only on WTQ and WikiSQL. The joint setting measures performance under broad multi-dataset supervision, while the subset setting tests cross-dataset transfer to the unseen \textbf{NQ-Tables} benchmark.}
\label{fig:all_recall_adapter}
\end{figure*}
\fi
\begin{figure*}[t]
    \centering
    \includegraphics[width=\linewidth]{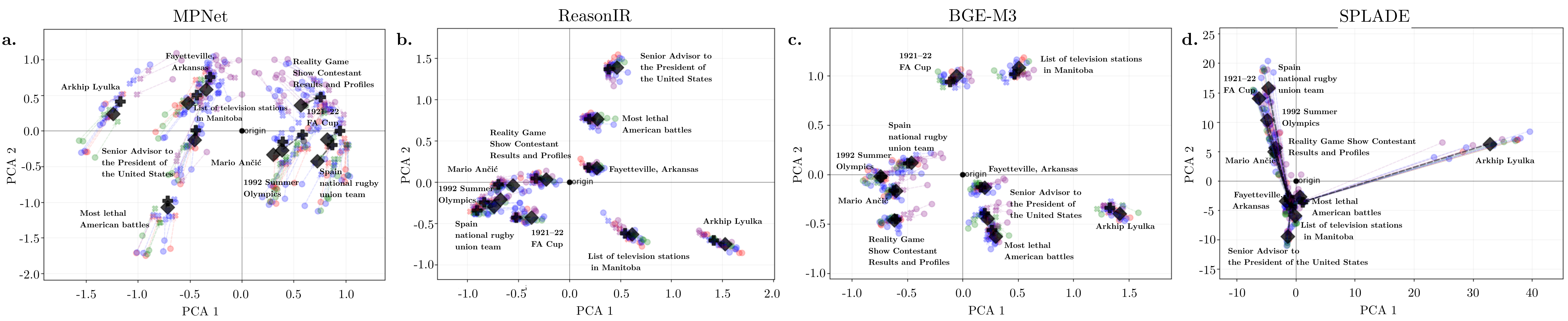}
    \caption{Adapter transport across ten tables and four retrieval models. Each panel shows PCA projections of serialization embeddings for ten distinct \textbf{WTQ} tables under \textbf{(a)} \texttt{MPNet}, \textbf{(b)} \texttt{ReasonIR}, \textbf{(c)} \texttt{BGE-M3}, and \textbf{(d)} \texttt{SPLADE}. Circles denote frozen embeddings across serialization views, crosses denote adapted embeddings after transport, and diamonds show the centroid before and after adaptation. Across all models, the adapter tightens serialization-specific embeddings around their respective centroids while preserving separation between different tables.}
\label{fig:ten_embedding}
\end{figure*}

Table~\ref{tab:recall_1_by_dataset_model} reports the full Recall@1 breakdown for all retriever families and serialization groups. While the main text emphasizes aggregate rank movement, this detailed view exposes the underlying absolute performance differences and provides a more fine-grained picture.

\paragraph{Baseline sensitivity across models and datasets.}
The base retrievers exhibit substantial variation across serialization formats, but the magnitude of that variation depends strongly on the retriever family. The largest swings appear for \texttt{MPNet} and \texttt{SPLADE}. On \textbf{WTQ}, \texttt{MPNet} ranges from 0.09 on \texttt{html} to 0.25 on \texttt{pipe} and \texttt{tsv}, a spread of 0.16, while on \textbf{WikiSQL} it ranges from 0.10 on \texttt{xml} to 0.21 on \texttt{tsv}. The pattern is even sharper on \textbf{NQ-Tables}, where \texttt{MPNet} drops to 0.01 on \texttt{mschema} but reaches 0.28 on \texttt{csv}. A similar sensitivity appears for \texttt{SPLADE}, which on \textbf{WikiSQL} ranges from 0.35 on \texttt{json} to 0.52 on \texttt{tsv}. By contrast, \texttt{BGE-M3} is much more stable within each dataset, for example, staying between 0.29 and 0.31 across all data representations on \textbf{WTQ}. These results support the central claim that serialization is not a neutral preprocessing choice, but a first-order determinant of table retrieval quality.

\paragraph{Retriever-dependent gains.}
The effects of adaptation are strongly retriever-dependent, and the gains are not uniform across datasets. The clearest improvements appear for dense models in selected serialization families, especially for \texttt{MPNet} on \textbf{WTQ} and \textbf{WikiSQL} under syntactically heavier formats. For example, on \textbf{WTQ} with \texttt{html}, \texttt{MPNet} rises from 0.09 to 0.18 with the joint adapter and to 0.17 with the subset adapter; on \textbf{WikiSQL} with \texttt{latex}, it rises from 0.12 to 0.17 under both adapters; and on \textbf{WikiSQL} with \texttt{xml}, it improves from 0.10 to 0.17. \texttt{ReasonIR} also benefits in several cases, especially for structural perturbations: on \textbf{WTQ}, shuffled rows improve from 0.22 to 0.26 and then 0.29, and shuffled columns improve from 0.24 to 0.27 and then 0.29. On \textbf{WikiSQL}, the same pattern appears with shuffled rows moving from 0.22 to 0.24 and 0.26. \texttt{BGE-M3} shows smaller and more localized gains, such as \textbf{NQ Tables} shuffled rows improving from 0.12 to 0.17 under the joint adapter. In contrast, \texttt{SPLADE} usually degrades under adaptation. On \textbf{WTQ}, its overall centroid score falls from 0.44 to 0.38 and 0.36, and on \textbf{NQ Tables} it drops from 0.26 to 0.17 and 0.21. This pattern is consistent with the claim that the transport-based correction aligns better with dense embedding geometries than with sparse lexical representations.

\paragraph{Sensitivity to serialization complexity.}
Adapter gains are most visible when the original serialization introduces substantial syntactic overhead or perturbation, but this trend is not universal. For \texttt{MPNet}, the strongest gains often occur in markup-heavy or schema-style representations: on \textbf{WTQ}, \texttt{html} improves from 0.09 to 0.18, \texttt{latex} from 0.15 to 0.20, and \texttt{mschema} from 0.10 to 0.17; on \textbf{WikiSQL}, \texttt{html} improves from 0.11 to 0.17 and \texttt{macschema} from 0.12 to 0.17. This suggests that, without changing much, the adapter improves performance for those serializations that initially perform poorly. For cleaner data-oriented formats such as \texttt{tsv}, the base model is often already comparatively strong, leaving limited room for improvement: \texttt{MPNet} stays at 0.25 on \textbf{WTQ} and shifts only from 0.21 to 0.20 or 0.21 on \textbf{WikiSQL}. The same interpretation broadly holds for \texttt{ReasonIR}, where gains are typically larger for shuffled or schema-oriented variants than for already strong formats such as \texttt{html} or \texttt{xml}. Overall, the evidence suggests that the adapter primarily attenuates format-specific variation rather than providing a uniform semantic improvement across all inputs.

\paragraph{Cross-dataset robustness and transfer.}
The subset-trained adapter, learned only on \textbf{WTQ} and \textbf{WikiSQL}, shows partial transfer to the unseen \textbf{NQ Tables} benchmark, but its effectiveness depends strongly on the retriever. For dense models, transfer is visible in several settings. For \texttt{MPNet}, the subset adapter outperforms the base model on \texttt{html} (0.06 to 0.09), \texttt{latex} (0.13 to 0.15), \texttt{json} (0.16 to 0.16, effectively neutral), \texttt{xml} (0.22 to 0.24), \texttt{transpose} (0.27 to 0.28), and \texttt{ddl} (0.08 to 0.11), although it still underperforms the base model on several popular formats such as \texttt{pipe} (0.25 to 0.22). Transfer is stronger for \texttt{ReasonIR}: its overall centroid score rises from 0.29 to 0.32, with gains on \texttt{pipe} (0.21 to 0.25), \texttt{space} (0.19 to 0.24), \texttt{csv} (0.25 to 0.28), \texttt{tsv} (0.22 to 0.27), and \texttt{transpose} (0.28 to 0.31). \texttt{BGE-M3} transfer is mixed and modest, while \texttt{SPLADE} remains substantially worse than the base model even with the subset adapter. These results indicate that some serialization-induced perturbations generalize across datasets, but the transfer is incomplete and retriever-specific. Figure~\ref{fig:ten_embedding} highlights how \texttt{SPLADE}'s adapter transport collapses into the same embedding orbit, resulting in poor performance with \texttt{SPLADE}.

\paragraph{Robustness under Table Perturbation.}
    We stress-test retrieval robustness under a \texttt{mixed-representation} setting, where each table is perturbed by assigning each row a randomly sampled serialization format from a fixed set of representations:
    
    \space $\mathcal{S}_{\text{perturb}} = \{\text{csv}, \text{tsv}, \text{html}, \text{markdown}, \text{latex}, \text{json}, \text{pipe}, \text{token}, \text{space}\}$
    
    removing global structural consistency. We construct a fully mixed-format corpus and encode it using frozen dense retrievers (MPNet, BGE-M3, ReasonIR) and the sparse retriever SPLADE, as described in Section~\ref{section:embedding_models}. The adapter is then applied post hoc to these embeddings, producing corrected representations without re-encoding. Retrieval is performed by ranking tables using cosine similarity:
    \[
        \text{score}(q, T) = \frac{f(q) \cdot z(T)}{\|f(q)\| \cdot \|z(T)\|}
    \]
    where $f(q)$ is the query embedding and $z(T)$ is either the base or adapted table embedding. Both adapter variants are trained exclusively on clean, single-format tables
    using the same training setup and frozen encoder described in Appendix~\ref{section:implementation_detail}.
    
    Table~\ref{tab:mixed_table_recall_results} shows that this perturbation induces substantial degradation in base models. The largest instability appears on \textbf{NQ}, where lexical cues are weakest: \texttt{MPNet} drops by $-11.3\%$, while \texttt{ReasonIR} improves by +25.8\% (full) and +28.0\% (subset). On \textbf{WTQ}, gains are more moderate but consistent, with \texttt{ReasonIR} improving by +15.2\%, while \texttt{SPLADE} and \texttt{BGE-M3} show smaller but stable improvements (+1.5\% and +2.5\%). On \textbf{WikiSQL}, which is structurally simpler, improvements are smaller (e.g., \texttt{ReasonIR}: +10.3\%), indicating that adaptation does not over-correct when perturbation impact is limited.
    
    Across datasets, improvements concentrate where base representations are least stable, while relatively stable baselines show only mild changes (e.g., \texttt{MPNet} on WTQ: $-1.1\%$). The subset adapter matches or exceeds full adaptation in several cases, suggesting that the learned transformation captures representation-invariant structure rather than format-specific patterns. At the same time, mixed behavior in \texttt{BGE-M3} on \textbf{NQ} ($-5.7\%$ under subset) indicates that effectiveness depends on the underlying embedding geometry.
    
    Overall, the results show that the adapter is most effective in high-noise, structurally unstable regimes, while remaining conservative when the base representation is already well-formed, and generalizes to heterogeneous inputs despite being trained only on clean data.

    \begin{table}[ht]
        \centering
        \scriptsize
        \setlength{\tabcolsep}{6pt}
        \renewcommand{\arraystretch}{1.1}
        \begin{tabularx}{\linewidth}{
            >{\raggedright\arraybackslash}X
            >{\raggedright\arraybackslash}X
            c
            cc
            cc
        }
        \toprule
        & & \multicolumn{1}{c}{Base} & \multicolumn{2}{c}{Adapted Full} & \multicolumn{2}{c}{Adapted Subset} \\
        \cmidrule(lr){3-3}\cmidrule(lr){4-5}\cmidrule(lr){6-7}
        Dataset & Model & R@1 & R@1 & $\Delta$\% & R@1 & $\Delta$\% \\
        \midrule
        \multirow{4}{*}{\textbf{WTQ}}
          & \texttt{MPNet}    & 0.2393 & 0.2367 & $-$1.1\% & 0.2331 & $-$2.6\% \\
          & \texttt{ReasonIR} & 0.2774 & 0.3195 & \textbf{+15.2\%} & 0.3181 & \textbf{+14.7\%} \\
          & \texttt{SPLADE}   & 0.3726 & 0.3783 & +1.5\%   & 0.3738 & +0.3\% \\
          & \texttt{BGE-M3}   & 0.3102 & 0.3181 & +2.5\%   & 0.3143 & +1.3\% \\
        \midrule
        \multirow{4}{*}{\textbf{NQ}}
          & \texttt{MPNet}    & 0.2847 & 0.2526 & $-$11.3\% & 0.2526 & $-$11.3\% \\
          & \texttt{ReasonIR} & 0.1925 & 0.2422 & \textbf{+25.8\%} & 0.2464 & \textbf{+28.0\%} \\
          & \texttt{SPLADE}   & 0.2101 & 0.2112 & +0.5\%    & 0.2360 & +12.3\% \\
          & \texttt{BGE-M3}   & 0.3240 & 0.3323 & +2.6\%    & 0.3054 & $-$5.7\% \\
        \midrule
        \multirow{4}{*}{\textbf{WikiSQL}}
          & \texttt{MPNet}    & 0.2020 & 0.1950 & $-$3.5\% & 0.2000 & $-$1.0\% \\
          & \texttt{ReasonIR} & 0.2616 & 0.2885 & \textbf{+10.3\%} & 0.2873 & \textbf{+9.8\%} \\
          & \texttt{SPLADE}   & 0.4103 & 0.4187 & +2.0\%   & 0.4248 & +3.5\% \\
          & \texttt{BGE-M3}   & 0.3331 & 0.3300 & $-$0.9\% & 0.3327 & $-$0.1\% \\
        \bottomrule
        \end{tabularx}
        \caption{\small Table retrieval Recall@1 under mixed-format perturbation, where every table in the dataset has its row-level serialization randomly corrupted by mixing formats (e.g., CSV, TSV, JSON) within a single table. Results are reported for the base encoder, full adapter the subset adapter. $\Delta$\% denotes percentage change in R@1 relative to the base model. \textbf{Bold} entries indicate the largest gains per dataset.}
        \label{tab:mixed_table_recall_results}
    \end{table}

\paragraph{Robustness rather than universal improvement.}
Overall, the adapter results support a robustness interpretation rather than a universal improvement claim. The method can materially improve dense retrievers in difficult serialization settings, especially for \texttt{MPNet} and \texttt{ReasonIR}, but it does not consistently improve every dataset, retriever, or representation family. In particular, \texttt{ReasonIR} is one of the strongest performers on \textbf{NQ Tables} even before adaptation, with base scores of 0.34 on \texttt{xml} and 0.31 on \texttt{ddl}, and the subset adapter further raises its overall centroid from 0.29 to 0.32. By contrast, \texttt{SPLADE} shows systematic degradation across almost all categories. The evidence, therefore, supports the narrower claim that centroid-guided transport can improve robustness within dense retrieval families under serialization shift, but sparse lexical representations limit the adapter's ability to do so.

\newpage
\begin{sidewaystable}
    \caption{Detailed retrieval results for Recall@1 across datasets, retriever families, and table serialization methods. Here, joint adapters are indicated with  ($\spadesuit$) and subset adapters with ($\diamondsuit$).}
    \label{tab:recall_1_by_dataset_model}
    \scriptsize % Dropping font size slightly helps even in landscape
    \setlength{\tabcolsep}{5pt} % Tightens the gaps between columns
    \renewcommand{\arraystretch}{1.25}
    \begin{tabular}{
        |l
        |l
        |C{0.7cm} C{0.7cm} C{0.7cm}
        |C{0.7cm} C{0.7cm} C{0.7cm} C{0.7cm} C{0.7cm} C{0.7cm} C{0.7cm} C{0.7cm}
        |C{0.7cm} C{0.7cm} C{0.9cm}
        |C{0.7cm} C{0.7cm} C{0.7cm}
        |C{0.9cm}|
        }
    \toprule
         &  & \multicolumn{3}{|c|}{\textbf{Popular Representation}}&\multicolumn{8}{|c|}{\textbf{Data Representation}}& \multicolumn{3}{|c|}{\textbf{Structural Transformation}}&\multicolumn{3}{|c|}{\textbf{Schema Definition Type}} &  \\
     Dataset& Model & pipe & token & space & csv & tsv & html & mark-down & latex & dict & json & xml & shuffled rows & shuffled cols & transpose & ms-chema & mac-schema & ddl & \textbf{Centroid} \\
    \midrule
    \multirow[c]{12}{*}{WTQ} & \texttt{MPNet} & 0.25 & 0.17 & 0.24 & 0.24 & 0.25 & 0.09 & 0.22 & 0.15 & 0.15 & 0.14 & 0.10 & 0.18 & 0.17 & 0.20 & 0.14 & 0.10 & 0.14 & 0.25 \\
     & \texttt{MPNet} $\spadesuit$ & 0.24 & 0.17 & 0.24 & 0.24 & 0.25 & 0.18 & 0.22 & 0.20 & 0.17 & 0.17 & 0.18 & 0.18 & 0.17 & 0.21 & 0.17 & 0.17 & 0.17 & 0.24 \\
     & \texttt{MPNet} $\diamondsuit$ & 0.24 & 0.18 & 0.24 & 0.24 & 0.25 & 0.17 & 0.21 & 0.20 & 0.17 & 0.17 & 0.18 & 0.20 & 0.19 & 0.22 & 0.18 & 0.18 & 0.17 & 0.22 \\
     \cline{2-20}
     & \texttt{ReasonIR} & 0.31 & 0.35 & 0.30 & 0.32 & 0.31 & 0.36 & 0.32 & 0.34 & 0.28 & 0.31 & 0.37 & 0.22 & 0.24 & 0.28 & 0.35 & 0.36 & 0.34 & 0.36 \\
     & \texttt{ReasonIR} $\spadesuit$ & 0.33 & 0.36 & 0.33 & 0.33 & 0.33 & 0.35 & 0.34 & 0.34 & 0.30 & 0.32 & 0.36 & 0.26 & 0.27 & 0.29 & 0.36 & 0.36 & 0.35 & 0.36 \\
     & \texttt{ReasonIR} $\diamondsuit$ & 0.34 & 0.35 & 0.34 & 0.34 & 0.33 & 0.35 & 0.34 & 0.34 & 0.31 & 0.33 & 0.36 & 0.29 & 0.29 & 0.31 & 0.36 & 0.36 & 0.35 & 0.35 \\
     \cline{2-20}
     & \texttt{BGE-M3} & 0.31 & 0.30 & 0.31 & 0.30 & 0.31 & 0.29 & 0.31 & 0.29 & 0.31 & 0.31 & 0.31 & 0.22 & 0.22 & 0.26 & 0.25 & 0.28 & 0.23 & 0.35 \\
     & \texttt{BGE-M3} $\spadesuit$ & 0.31 & 0.30 & 0.31 & 0.29 & 0.31 & 0.29 & 0.31 & 0.30 & 0.30 & 0.29 & 0.31 & 0.22 & 0.22 & 0.27 & 0.29 & 0.30 & 0.28 & 0.33 \\
     & \texttt{BGE-M3} $\diamondsuit$ & 0.31 & 0.30 & 0.31 & 0.30 & 0.31 & 0.30 & 0.31 & 0.30 & 0.31 & 0.30 & 0.31 & 0.24 & 0.23 & 0.27 & 0.30 & 0.30 & 0.28 & 0.32 \\
     \cline{2-20}
     & \texttt{SPLADE} & 0.41 & 0.37 & 0.42 & 0.43 & 0.44 & 0.33 & 0.37 & 0.41 & 0.35 & 0.30 & 0.38 & 0.29 & 0.30 & 0.32 & 0.35 & 0.36 & 0.41 & 0.44 \\
     & \texttt{SPLADE} $\spadesuit$ & 0.36 & 0.36 & 0.38 & 0.38 & 0.39 & 0.32 & 0.35 & 0.38 & 0.32 & 0.28 & 0.35 & 0.26 & 0.28 & 0.31 & 0.33 & 0.34 & 0.38 & 0.38 \\
     & \texttt{SPLADE} $\diamondsuit$ & 0.36 & 0.36 & 0.37 & 0.37 & 0.38 & 0.33 & 0.34 & 0.37 & 0.34 & 0.31 & 0.35 & 0.28 & 0.30 & 0.32 & 0.33 & 0.34 & 0.37 & 0.36 \\
    \midrule
    \bottomrule
    \multirow[c]{12}{*}{WIKISQL} & \texttt{MPNet} & 0.20 & 0.16 & 0.20 & 0.20 & 0.21 & 0.11 & 0.21 & 0.12 & 0.14 & 0.13 & 0.10 & 0.16 & 0.17 & 0.20 & 0.17 & 0.12 & 0.15 & 0.24 \\
     & \texttt{MPNet} $\spadesuit$ & 0.20 & 0.16 & 0.20 & 0.20 & 0.20 & 0.17 & 0.20 & 0.17 & 0.14 & 0.14 & 0.17 & 0.16 & 0.17 & 0.21 & 0.18 & 0.17 & 0.17 & 0.21 \\
     & \texttt{MPNet} $\diamondsuit$ & 0.20 & 0.17 & 0.20 & 0.20 & 0.21 & 0.17 & 0.20 & 0.17 & 0.16 & 0.15 & 0.17 & 0.17 & 0.18 & 0.22 & 0.18 & 0.17 & 0.18 & 0.20 \\
     \cline{2-20}
     & \texttt{ReasonIR} & 0.27 & 0.31 & 0.26 & 0.28 & 0.28 & 0.31 & 0.28 & 0.31 & 0.28 & 0.29 & 0.34 & 0.22 & 0.22 & 0.27 & 0.35 & 0.35 & 0.33 & 0.33 \\
     & \texttt{ReasonIR} $\spadesuit$ & 0.29 & 0.31 & 0.29 & 0.29 & 0.30 & 0.30 & 0.29 & 0.31 & 0.28 & 0.29 & 0.33 & 0.24 & 0.24 & 0.28 & 0.34 & 0.34 & 0.33 & 0.33 \\
     & \texttt{ReasonIR} $\diamondsuit$ & 0.29 & 0.31 & 0.29 & 0.30 & 0.30 & 0.30 & 0.29 & 0.31 & 0.29 & 0.30 & 0.32 & 0.26 & 0.25 & 0.28 & 0.33 & 0.33 & 0.32 & 0.32 \\
     \cline{2-20}
     & \texttt{BGE-M3} & 0.33 & 0.33 & 0.34 & 0.29 & 0.34 & 0.31 & 0.32 & 0.32 & 0.32 & 0.32 & 0.32 & 0.26 & 0.26 & 0.29 & 0.30 & 0.34 & 0.25 & 0.37 \\
     & \texttt{BGE-M3} $\spadesuit$ & 0.33 & 0.32 & 0.34 & 0.29 & 0.34 & 0.31 & 0.31 & 0.32 & 0.31 & 0.31 & 0.32 & 0.26 & 0.25 & 0.29 & 0.33 & 0.35 & 0.28 & 0.35 \\
     & \texttt{BGE-M3} $\diamondsuit$ & 0.33 & 0.32 & 0.34 & 0.30 & 0.34 & 0.32 & 0.32 & 0.32 & 0.32 & 0.32 & 0.32 & 0.27 & 0.26 & 0.29 & 0.34 & 0.35 & 0.29 & 0.35 \\
     \cline{2-20}
     & \texttt{SPLADE} & 0.49 & 0.44 & 0.50 & 0.50 & 0.52 & 0.36 & 0.45 & 0.49 & 0.39 & 0.35 & 0.43 & 0.34 & 0.35 & 0.37 & 0.43 & 0.43 & 0.49 & 0.51 \\
     & \texttt{SPLADE} $\spadesuit$ & 0.42 & 0.41 & 0.43 & 0.44 & 0.44 & 0.36 & 0.42 & 0.43 & 0.36 & 0.32 & 0.41 & 0.31 & 0.31 & 0.35 & 0.40 & 0.40 & 0.44 & 0.44 \\
     & \texttt{SPLADE} $\diamondsuit$ & 0.43 & 0.42 & 0.44 & 0.44 & 0.45 & 0.38 & 0.43 & 0.44 & 0.38 & 0.36 & 0.42 & 0.34 & 0.34 & 0.37 & 0.41 & 0.41 & 0.45 & 0.44 \\
    \midrule
    \bottomrule
    \multirow[c]{12}{*}{NQ} & \texttt{MPNet} & 0.25 & 0.19 & 0.24 & 0.28 & 0.26 & 0.06 & 0.12 & 0.13 & 0.11 & 0.16 & 0.22 & 0.13 & 0.09 & 0.27 & 0.01 & 0.02 & 0.08 & 0.20 \\
     & \texttt{MPNet} $\spadesuit$ & 0.21 & 0.16 & 0.19 & 0.22 & 0.21 & 0.08 & 0.10 & 0.13 & 0.10 & 0.12 & 0.22 & 0.11 & 0.07 & 0.25 & 0.02 & 0.02 & 0.11 & 0.15 \\
     & \texttt{MPNet} $\diamondsuit$ & 0.22 & 0.17 & 0.22 & 0.25 & 0.23 & 0.09 & 0.11 & 0.15 & 0.12 & 0.16 & 0.24 & 0.11 & 0.08 & 0.28 & 0.02 & 0.03 & 0.11 & 0.18 \\
     \cline{2-20}
     & \texttt{ReasonIR} & 0.21 & 0.28 & 0.19 & 0.25 & 0.22 & 0.29 & 0.30 & 0.32 & 0.26 & 0.24 & 0.34 & 0.15 & 0.08 & 0.28 & 0.10 & 0.15 & 0.31 & 0.29 \\
     & \texttt{ReasonIR} $\spadesuit$ & 0.24 & 0.28 & 0.19 & 0.25 & 0.25 & 0.31 & 0.28 & 0.29 & 0.27 & 0.25 & 0.33 & 0.18 & 0.10 & 0.28 & 0.17 & 0.23 & 0.29 & 0.29 \\
     & \texttt{ReasonIR} $\diamondsuit$ & 0.25 & 0.29 & 0.24 & 0.28 & 0.27 & 0.31 & 0.30 & 0.31 & 0.28 & 0.25 & 0.34 & 0.18 & 0.11 & 0.31 & 0.15 & 0.21 & 0.31 & 0.32 \\
     \cline{2-20}
     & \texttt{BGE-M3} & 0.28 & 0.25 & 0.24 & 0.29 & 0.25 & 0.26 & 0.28 & 0.26 & 0.31 & 0.31 & 0.33 & 0.12 & 0.07 & 0.34 & 0.18 & 0.20 & 0.24 & 0.28 \\
     & \texttt{BGE-M3} $\spadesuit$ & 0.27 & 0.26 & 0.25 & 0.27 & 0.25 & 0.25 & 0.26 & 0.27 & 0.29 & 0.30 & 0.31 & 0.17 & 0.09 & 0.30 & 0.20 & 0.21 & 0.27 & 0.29 \\
     & \texttt{BGE-M3} $\diamondsuit$ & 0.26 & 0.26 & 0.24 & 0.27 & 0.23 & 0.25 & 0.26 & 0.26 & 0.30 & 0.30 & 0.32 & 0.13 & 0.08 & 0.31 & 0.19 & 0.19 & 0.27 & 0.29 \\
     \cline{2-20}
     & \texttt{SPLADE} & 0.29 & 0.24 & 0.30 & 0.33 & 0.32 & 0.24 & 0.17 & 0.30 & 0.30 & 0.28 & 0.30 & 0.16 & 0.09 & 0.33 & 0.14 & 0.20 & 0.31 & 0.26 \\
     & \texttt{SPLADE} $\spadesuit$ & 0.15 & 0.16 & 0.15 & 0.16 & 0.17 & 0.18 & 0.13 & 0.18 & 0.18 & 0.17 & 0.19 & 0.19 & 0.10 & 0.18 & 0.12 & 0.17 & 0.20 & 0.17 \\
     & \texttt{SPLADE} $\diamondsuit$ & 0.18 & 0.16 & 0.18 & 0.19 & 0.19 & 0.19 & 0.13 & 0.20 & 0.19 & 0.19 & 0.21 & 0.14 & 0.07 & 0.22 & 0.12 & 0.17 & 0.21 & 0.21 \\
    \midrule
    \bottomrule
    \end{tabular}
\end{sidewaystable}

\newpage
\end{document}